\documentclass{article}


\usepackage[final, nonatbib]{neurips_2023}




\usepackage[utf8]{inputenc} 
\usepackage[T1]{fontenc}    
\usepackage{hyperref}       
\usepackage{url}            
\usepackage{booktabs}       
\usepackage{amsfonts}       
\usepackage{nicefrac}       
\usepackage{microtype}      
\usepackage{xcolor}         

\usepackage{amsmath}
\usepackage{amssymb}
\usepackage{mathtools}
\usepackage{amsthm}

\theoremstyle{plain}
\newtheorem{theorem}{Theorem}[section]
\newtheorem{proposition}{Proposition}[section]
\newtheorem{lemma}{Lemma}[section]
\newtheorem{claim}{Claim}[section]

\theoremstyle{definition}
\newtheorem{definition}{Definition}[section]

\theoremstyle{remark}
\newtheorem{remark}{Remark}[section]

\usepackage[capitalize,noabbrev]{cleveref}

\usepackage[ruled,vlined,linesnumbered]{algorithm2e}
\crefname{algocf}{alg.}{algs.}
\Crefname{algocf}{Algorithm}{Algorithms}
\crefname{claim}{Claim}{Claims}

\usepackage{xifthen}
\usepackage{comment}

\newcommand{\norm}[1]{\left\Vert#1\right\Vert}

\newcommand{\abs}[1]{\left\vert#1\right\vert}
\newcommand{\set}[1]{\left\{#1\right\}}
 \newcommand{\eps}{\varepsilon}
\newcommand{\inner}[2]{\langle #1,#2\rangle}

\newcommand{\wt}{\widetilde}
\newcommand{\wh}{\widehat}

\newcommand{\eqdef}{\stackrel{\mathsf{def}}{=}}

\newcommand{\N}{\mathbb{N}}

\newcommand{\R}{\mathbb{R}}

\newcommand{\zero}{\mathbf{0}}
\newcommand{\one}{\mathbf{1}}

\newcommand{\msf}{\mathsf}

\renewcommand{\Pr}{\operatorname*{\mathbf{Pr}}}
\DeclareMathOperator*{\E}{\mathbf{E}}

\newcommand{\pr}[2][]{ \ifthenelse{\isempty{#1}}
  {\Pr\left[#2\right]} {\Pr_{#1}\left[#2\right]} }
\newcommand{\ex}[2][]{ \ifthenelse{\isempty{#1}}
  {\E\left[#2\right]}
  {\E_{#1}\left[#2\right]} }


\newcommand{\resp}{resp.,\ }
\newcommand{\ie} {i.e.,\ }
\newcommand{\eg} {e.g.,\ }


\newlength{\probwidth}
\setlength{\probwidth}{4cm}

\usepackage[normalem]{ulem}

\newcommand{\fix}[1]{#1}

\title{Formatting Instructions For NeurIPS 2023}

%

\author{%
  Xinyu Mao \\
  University of Southern California\\
  \texttt{xinyumao@usc.edu} \\
  \And
  Jiapeng Zhang \\
  University of Southern California\\
  \texttt{jiapengz@usc.edu}
}

\begin{document}

\title{On the Power of SVD in the Stochastic Block Model}

\maketitle

\begin{abstract}
A popular heuristic method for improving clustering results 
is to apply dimensionality reduction before running clustering algorithms.
It has been observed that spectral-based dimensionality reduction tools, such as PCA or SVD, improve the performance of clustering algorithms in many applications. 
This phenomenon indicates that spectral method not only serves as a dimensionality reduction tool, but also contributes to the clustering procedure in some sense. It is an interesting question to understand the behavior of spectral steps in clustering problems.

As an initial step in this direction, this paper studies the power of vanilla-SVD algorithm in the stochastic block model (SBM). We show that, in the symmetric setting, vanilla-SVD algorithm recovers all clusters correctly. This result answers an open question posed by Van Vu (Combinatorics Probability and Computing, 2018) in the symmetric setting.
\end{abstract}

\section{Introduction}

Clustering is a fundamental task in machine learning,
with applications in many fields, such as biology, data mining, and statistical physics. 
Given a set of objects, the goal is to partition them into \emph{clusters} according to their similarities. 
Objects and known relations can be represented in various ways. 
In most cases, objects are represented as vectors in $\R^d$, forming a data set $\mathcal{D} \subset \R^d$;
each coordinate is called a feature, whose value is directly derived from raw data.  

In many applications, the number of features is very large. It has been observed that the performance of classical clustering algorithms such as K-means may be worse on high-dimensional datasets. 
Some people call this phenomenon \emph{curse of dimensionality in machine learning} \cite{SEK2004Challenges}. 
A popular heuristic method to address this issue is to apply dimensionality reduction before clustering. Among tools for dimensionality reduction, it is noted in practice that spectral methods such as \emph{principal component analysis} (PCA) and \emph{singular value decomposition} (SVD) significantly improve clustering results, \eg \cite{SEK2004Challenges, KAH2019challenges}.

A natural question arises: \emph{why do spectral methods help to cluster high-dimensional datasets?} Some practitioners believe one reason is that the spectral method filters some noise from the high-dimensional data \cite{antonelli2004principal, SEK2004Challenges, zhang2009pca, KAH2019challenges}. Simultaneously, many theory works also (partially) support this explanation \cite{Abbe17Entrywise, EBW18Unperturbed, LZK2022Biwhitening,MZ2022compressibility}. With this explanation in mind, people analyzed the behavior of spectral-based algorithms with noise perturbation. Based on these analyses, many algorithms were proposed to recover clusters in probabilistic generative models. Among them, a well-studied model is the \emph{signal-plus-noise model}.

\paragraph{Signal-Plus-Noise model}
In this model, we assume that
each observed sample $\wh{v}_i$ has the form $\wh{v}_i = v_i + e_i$,
where $v_i$ is a ground-truth vector and $e_i$ is a random noise vector. For any two sample vectors $\wh{v}_i, \wh{v}_j$, if they are from the same cluster, their corresponding ground-truth vectors are identical, \ie $v_i=v_j$. Signal-plus-noise model is very general; it has plentiful variants with different types of ground-truth vectors and noise distribution. 
In this paper, we focus on an important instance known as the \emph{stochastic block model} (SBM). Though SBM is not as broad as 
general signal-plus-noise model, it usually serves as a \emph{benchmark} for clustering and provides \emph{preliminary intuition} about random graphs.

\paragraph{Stochastic block model} The SBM is first introduced by \cite{Holland1983SBM} and is widely used as a theoretical benchmark for graph clustering algorithms. 
In the paper, we focus on the \emph{symmetric version of stochastic block model} (SSBM), described as follows.
Given a set of $n$ vertices $V$, we uniformly partition them into $k$ disjoint sets (clusters), denoted by $V_1, \dots, V_k$.
Based on this partition, a random (undirected) graph $\wh{G} = (V,E)$ is sampled in the following way: for all pairs of vertices $u, v \in V$, 
 an edge $\set{u, v}$ is added independently with probability $p$, if $u, v \in V_{\ell}$ for some $\ell$;
otherwise, an edge $\set{u, v}$ is added independently with probability $q$.

We usually assume that $p>q$. The task is to recover the hidden partition $V_1,\dots,V_k$ from the random graph $\wh{G}$. We denote this model as $\msf{SSBM}(V, n, k, p, q)$.

\paragraph{SBM as a signal-plus-noise model}
Though SBM was originally designed for graph clustering, we view it as a special form of vector clustering. 
Namely, given the adjacency matrix of a graph $\wh{G} \in \set{0 , 1}^{V \times V}$, the columns of $\wh{G}$ form a set of $n = |V|$ vectors.
To see that SBM fits into the signal-plus-noise model, note that in the SBM, the adjacency matrix $\wh{G} \in \set{0 , 1}^{V \times V}$ can be viewed as a fixed matrix $G$ plus a random noise,
\ie $\wh{G} = G + E$, where $G \eqdef \ex{\wh{G}}$ is the mean and $E$ is a zero-mean random matrix. 
More precisely, in the case of SSBM, 
$$
    G_{uv} = \begin{cases}
        p  &\text{if $u, v \in V_\ell $ for some $\ell$}, \\
        q &\text{otherwise},
    \end{cases} \text{\quad and\quad}
    E_{uv} = \begin{cases}
        1 - G_{uv} &\text{with probability $G_{uv}$}, \\
        -G_{uv} &\text{with probability $1 - G_{uv}$},
    \end{cases}
$$  
where the random variables $\set{E_{uv} : u \leq v}$ are independent and $E_{vu} = E_{uv}$ for all $u, v \in V$. \footnote{Assume we fix an order of the vertices in $V$.} 
\subsection{Motivations: Analyzing Vanilla Spectral Algorithms}

Since the seminal work by McSherry \cite{McSherry01}, many spectral-based algorithms have been proposed and studied in the SBM \cite{giesen2005reconstructing, Vu-SVD, lei2015consistency, EBW18Unperturbed, cole2019recovering, Abbe17Entrywise, MZ22} and even more general signal-plus-noise models \cite{Abbe17Entrywise, EBW18Unperturbed, CTP2019Two-to-infinity, LZK2022Biwhitening,MZ2022compressibility}.
These algorithms are largely based on the spectral analysis of random matrices. 
The purpose of designing and analyzing such algorithms is twofold.

\paragraph{Understanding the limitation of spectral-based algorithms}
SBM is specified by parameters, such as $n,k,p,q$ in the symmetric case.
Clustering is usually getting harder for larger $k$ and smaller gap $(p-q)$. 
Many existing works aim to understand in which \emph{regimes} of these parameters it is possible to recover the hidden partition. 
In this regard, the state-of-the-art bound is given by Vu \cite{Vu-SVD}. 
Concretely, \cite{Vu-SVD} proved that, in the symmetric setting, there is an algorithm that recovers all clusters if $n \geq C \cdot k\left(\frac{\sigma \sqrt{k} + \sqrt{\log n}}{p - q}\right)^2$, where $\sigma^2 \eqdef \max\set{p(1 - p), q(1 - q)}$ and $C$ is a constant.

\paragraph{Understanding spectral-based algorithms in practice} Besides analyzing spectral algorithms in theory, the other purpose, which is \emph{the primary purpose of this paper}, is to explain the usefulness of such algorithms in practice.
Indeed, as we mentioned before, many spectral-based algorithms, as observed in practice, can filter the noise and address the curse of dimensionality \cite{antonelli2004principal, SEK2004Challenges, zhang2009pca, KAH2019challenges}. Some representative algorithms are PCA and SVD. Furthermore, it has been observed that spectral algorithms used in practice, such as PCA or SVD, are usually \emph{very simple}: they just project data points into some lower-dimension subspace, and no extra steps are conducted. 

In stark contrast, most of the aforementioned theoretical algorithms have pre-processing or post-processing steps. 
For example, the idea in \cite{lei2015consistency} is that one first applies SVD, and then runs a variant of K-means to clean up the clustering; the main algorithm in \cite{Vu-SVD} 
partitions the graph into several parts and uses these parts in different ways. 
As noted in \cite{Vu-SVD}, these extra steps are only for the \emph{purpose of theoretical analysis}: From the perspective of algorithm design, these extra steps appear redundant. 
Later on, \cite{Abbe17Entrywise} coined the phrase  \emph{vanilla spectral algorithms} to describe spectral algorithms that do not include any additional steps.
Both \cite{Vu-SVD} and \cite{Abbe17Entrywise} conjectured that vanilla spectral algorithms are themselves good clustering algorithms.
In practice, this is a widely-used heuristic; 
however, in theory, the analysis of vanilla spectral algorithms is not satisfactory due to the lack of techniques for analysis.  
We refer to \cite{MZ22} for a detailed discussion on barriers of the current analysis. 

\paragraph{Why do we study vanilla algorithms?}Our main focus is particularly on vanilla spectral algorithms for two reasons:
\begin{enumerate}
    \item Vanilla spectral algorithms are the most popular in practice---no extra steps are widely used. Plus, their performance seems good enough.
    The lack of theoretical analysis is mostly due to technical obstacles.
    \item  A vanilla spectral algorithm is often simple and is not specifically designed for theoretical models such as SBM. In contrast, some complicated algorithms use extra steps which are designed for SBM. These steps made the analysis of SBM go through (as commented by \cite{Vu-SVD}). Meanwhile, these extra steps exploit specific structures and may cause `overfittings' on SBM, which makes these algorithms less powerful in practice. 
\end{enumerate}
The main purpose of this paper is to \emph{theoretically understand} the power of \emph{practically successful} vanilla spectral algorithms. To this end, we study SBM as a preliminary demonstration. We \emph{do not} aim to design algorithms for SBM that outperforms existing algorithms.

\subsection{Our Results}

The contribution of this paper is twofold. On the one hand, we show that vanilla algorithms (\cref{algo:svd}) is indeed a clustering algorithm in the SSBM for a wide range of parameters, breaking previous barrier on analyzing on only constant number of clusters. On the other hand, we provide a novel analysis on matrix perturbation with random noise. We discuss more details on this part in \cref{sec:proof:outline}.

Recall that parameters of SBM is specified by $\msf{SSBM}(V, n, k, p, q)$, where $n =|V|$. 
Let $\sigma^2 = \max\set{p(1 - p), q(1 - q)}$.
Our main result is stated below.

\begin{theorem} \label{thm:main}
There exists a constant $C>0$ with the following property. 
    In the model $\msf{SSBM}(V, n, k, p, q)$, 
    if $\sigma^2 \geq C\log n / n$ and
    $n \geq C \cdot k \left(\frac{\sqrt{kp} \cdot \log^{6} n + \sqrt{\log n}}{p - q}\right)^2$,
    then \cref{algo:svd} recovers all clusters with probability $1 - O(n^{-1})$. 
\end{theorem}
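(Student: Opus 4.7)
The algorithm I analyze takes the best rank-$k$ approximation $\wh G_k$ of the observed matrix $\wh G$ and declares two vertices $u,v$ to lie in the same cluster iff the columns $(\wh G_k)_u$ and $(\wh G_k)_v$ are within a suitable $\ell_2$ threshold. The starting point is the decomposition $\wh G = G + E$ noted in the introduction: $G$ is rank $k$ and block-constant, its columns coincide within a cluster, and between clusters they differ in $\ell_2$ by $\sqrt{2n/k}\,(p-q)$. It therefore suffices to prove a columnwise approximation
$$
    \max_{v \in V} \norm{(\wh G_k)_v - G_v}_2 \;=\; o\!\left(\sqrt{n/k}\,(p-q)\right),
$$
after which a simple threshold step separates same-cluster from different-cluster pairs and recovers the partition by a triangle inequality.

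A standard random-matrix bound gives $\opnorm{E} = O(\sigma\sqrt n)$ with probability $1 - n^{-\omega(1)}$, and Weyl together with Davis--Kahan propagates this to $\opnorm{\wh G_k - G} = O(\sigma\sqrt n)$. Such an operator-norm estimate suffices only for $k = O(1)$: on average a column of $\wh G_k - G$ has $\ell_2$ norm of order $\opnorm{\wh G_k - G}\cdot\sqrt{k/n}$, which matches the between-cluster gap only for constant $k$. To go beyond, I aim for a row-wise (that is, $2\to\infty$) perturbation bound for the top-$k$ singular vectors: letting $U,\wh U \in \R^{V\times k}$ be the top-$k$ left singular subspaces of $G$ and $\wh G$, the target is an orthogonal $O$ with $\norm{\wh U - UO}_{2\to\infty}$ of order $n^{-1/2}\cdot\sqrt{kp}\,(p-q)^{-1}\cdot \log^{O(1)}n$. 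Combined with standard control of the singular values, this converts into the columnwise bound above.

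The main technical device will be a leave-one-out argument. For each vertex $v$ I define an auxiliary matrix $\wh G^{(v)}$ that agrees with $\wh G$ except that row and column $v$ are replaced by their means, so that the top-$k$ singular vectors of $\wh G^{(v)}$ are independent of the edges incident to $v$. The $v$-th row of $\wh U - UO$ then splits into (i) a bias term involving $\wh G^{(v)} - G$ that inherits the global spectral bound, and (ii) a concentration term obtained by applying a vector Bernstein inequality to $(\wh G - \wh G^{(v)})$ acting on a vector independent of row $v$. These pieces feed into a recursion mixing $\norm{\wh U - UO}_{2\to\infty}$ with its Frobenius norm, which I close by a self-bounding argument; iterating this recursion to propagate entrywise control through the $k$-dimensional eigenspace is what yields the $\log^6 n$ polylog overhead in the stated condition on $n$.

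The hardest part is that $k$ is allowed to grow with $n$. All prior entrywise analyses of SSBM work only for $k = O(1)$, where the eigengap of $G$, of order $n(p-q)/k$, dominates $\opnorm{E}$ with comfortable slack; for polynomially large $k$ this slack disappears and the leave-one-out recursion must be carried out in a norm that is simultaneously sensitive to the $k$ coordinates of each row and to the global $\ell_2$ budget. In particular, naive union bounds over coordinates of the singular vectors cost a factor of $\sqrt k$ and must be replaced by vector Bernstein inequalities applied in the singular-value-weighted basis; getting the constants right through this iteration is the step most likely to require the bulk of the technical work, and it is presumably where the novel perturbation analysis advertised in \cref{sec:proof:outline} will be deployed.
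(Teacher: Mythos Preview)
Your high-level reduction matches the paper exactly: it too reduces to showing $\norm{P_{\wh G_k}\wh G_u - G_u} \le 0.1(p-q)\sqrt{n/k}$ for every $u$, splits this into a deviation term $\norm{(P_{\wh G_k}-I)G_u}$---handled via the best-rank-$k$ property and $\norm{E}_2 = O(\sigma\sqrt n)$ just as you describe---and a noise term $\norm{P_{\wh G_k}E_u}$.

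For the noise term, however, your plan diverges from the paper and contains a genuine gap. You propose leave-one-out in the style of \cite{Abbe17Entrywise}, combined with Davis--Kahan and vector Bernstein, to obtain a $2\to\infty$ bound on $\wh U - UO$. This is precisely the family of techniques the paper argues has not been pushed past $k=O(1)$; you acknowledge as much, call the growing-$k$ regime ``the hardest part,'' and then defer resolution to ``the novel perturbation analysis advertised in \cref{sec:proof:outline}.'' But that analysis is \emph{not} a refinement of leave-one-out and does not plug into your framework. Instead the paper approximates the projector $P_{\wh G_k}$ by an explicit polynomial $\varphi(\wh G)$, with $\varphi(t)=\psi(t)^{\log n}$ for the quadratic $\psi$ satisfying $\psi(\lambda_1(G))=\psi((p-q)n/k)=1$ and $\psi(0)=0$. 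A rank-one-perturbation calculation shows the top-$k$ eigenvalues of $G$ cluster at $\lambda_1(G)$ and at $(p-q)n/k$, so $\varphi$ is $\approx 1$ there and $\approx 0$ on the rest of the spectrum, even after Weyl-sized perturbation by $E$. One then writes
\[
\norm{P_{\wh G_k}E_u} \;\le\; 2\norm{(\varphi(\wh G)-\varphi(G))E_u} + 3\norm{P_{G_k}E_u};
\]
the second piece is easy because $P_{G_k}$ is deterministic and independent of $E_u$, and the first is expanded algebraically into words in $E$ and $G$ and bounded entrywise via a concentration inequality for $E^tS$ from \cite{MZ22}. No eigenvector comparison, no Davis--Kahan, no leave-one-out appears.

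Two further corrections: the $\log^6 n$ overhead does not arise from iterating a recursion as you suggest---the paper states explicitly it is inherited from the black-box $E^tS$ concentration bound; and your proposed fix for large $k$ (``vector Bernstein in the singular-value-weighted basis'') is not a concrete argument. As written, the proposal identifies the known obstacle but does not resolve it, and the paper's resolution lies outside the leave-one-out paradigm you set up.
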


Here we describe the vanilla-SVD algorithm in more detail.
Algorithms in \cite{McSherry01, Vu-SVD, cole2019recovering} share a common idea: 
they both use SVD-based methods to find a clear-cut
vector representation of vertices.
That is, every node $v \in V$ is associated with a vector $\rho(v)$,
and we say a vector representation $\rho$ is \emph{clear-cut} if the following holds for some threshold $\Delta$:
 if $u ,v \in V_\ell$ for some $\ell$, then $\norm{\rho(u) - \rho(v)} \leq \Delta / 4$; 
otherwise, $\norm{\rho(u) - \rho(v)} \geq \Delta$. 

Once a clear-cut representation is found, the clustering task is easy:
If the parameters $n, k, p, q$ are all known, we can calculate $\Delta$ and simply decide whether two vertices are in the same cluster based on their distance; in the case where $\Delta$ is unknown, we need one more step. \footnote{For example, one possible implementation is as follows: create a minimal spanning tree according to the distances under $\rho$, then remove the heaviest $(k - 1)$ edges, resulting in $k$ connected components, and output these components as clusters.} Following \cite{Vu-SVD}, we denote by $\mathtt{ClusterByDistance}$ 
an algorithm that recovers the partition from a clear-cut representation. 
One natural representation is obtained by SVD as follows.
Let $\wh{G} \in \set{0 , 1}^{V \times V}$ be the adjacent matrix of the input graph,
and let $P_{\wh{G}_k}$ be the orthogonal projection matrix onto the space spanned by the first $k$ eigenvectors of $\wh{G}$.
Then set $\rho(u) \eqdef P_{\wh{G}_k} \wh{G}_u$, where $\wh{G}_{u}$ is the column index by $u \in V$.
This yields \cref{algo:svd}, the vanilla-SVD algorithm.
\begin{algorithm}
    \caption{Vanilla-SVD algorithm for graph clustering}
    \label{algo:svd}
    Input: adjacent matrix $\wh{G} \in \set{0 , 1}^{V \times V}$

    Output: a partition of $V$

    \begin{enumerate}
        \item  Compute $\rho(u) \eqdef  P_{\wh{G}_k} \wh{G}_u$ for each $u \in V$. 

        \item Run $\mathtt{ClusterByDistance}$ with representation $\rho$.
    \end{enumerate}
\end{algorithm}

\subsection{Comparison with Existing Analysis for Vanilla Spectral Algorithms in the SBM}

To the best of our knowledge, there are very few works on the analysis of vanilla spectral algorithms \cite{Abbe17Entrywise, EBW18Unperturbed, priebe2019two}. All of them only apply to the case of $k=O(1)$. In this work, we obtain the first analysis for general parameters $n,k,p,q$, in the symmetric SBM setting.

\paragraph{Davis-Kahan approaches}
To study spectral algorithms in signal-plus-noise models, a key step is to understand how random noise perturbs the eigenvectors of a matrix. A commonly-used technical ingredient is the Davis-Kahan $\sin\Theta$ theorem (or its variant). However, this type of approach faces two challenges in the SBM.
\begin{itemize}
    \item Davis-Kahan leads to \emph{worst-case perturbation} bounds. For perturbations caused by random noises, such as signal-plus-noise models, Davis-Kahan $\sin\Theta$ theorem is sometimes \emph{suboptimal.}
    \item These $\sin\Theta$ theorems only lead to bound on $2$-norm. However, in the SBM analysis, we may need $(2\to\infty)$-norm bounds. See \cite{CTP2019Two-to-infinity} for more discussions.
\end{itemize}
Previous works such as \cite{Abbe17Entrywise, EBW18Unperturbed, priebe2019two} mainly followed this approach. 
They proposed some novel ideas to (partially) address these two challenges, but only apply to the case of $k = O(1)$.  
In contrast, our approach, following the power-iteration-based analysis proposed by \cite{MZ22}, completely avoids Davis-Kahan $\sin\Theta$ theorem and can handle the case of $k = \omega(1)$.

\paragraph{Comparison with \cite{MZ22}}
Inspired by power iteration methods, Mukherjee and Zhang \cite{MZ22} proposed a new approach to analyze the perturbation of random matrices. The idea is to approximate the eigenvectors of a matrix by its power. In fact, this method has been widely used in practice as a fast algorithm to approximate eigenvectors.  
However, there are two limitations of \cite{MZ22}. 
\begin{itemize}
    \item Their analysis requires a nice structure of the mean matrix, i.e., all large eigenvalues are more or less the same. 
    \item Their algorithm is not vanilla as it has a `centering step'. Moreover, their algorithm requires the knowledge of parameters $p, q, k$, and particularly, the centering step alone requires the knowledge of $q$. In comparison, we only need to know $k$; further, we can also guess $k$ (by checking the number of large eigenvalues) and then make \cref{algo:svd} fully parameter-free. 
\end{itemize}

To overcome these limitations, we introduce a novel `polynomial approximation + entrywise analysis' method, which makes this analysis more robust and requires less structure. More details will be discussed in \cref{sec:proof:outline}.
\fix{\paragraph{Regarding parameters in \cref{thm:main}}
The difference between the parameters in our results and those in Vu’s paper is that we replaced $\sigma$ by $\sqrt{p} \log^6 n$.
If $p < 0.9$, $\sigma$ and $\sqrt{p}$ 
are equal up to constant, so our bound is essentially the same as \cite{Vu-SVD} except for the $\log^6 n$ factor. 
We believe the extra 
$\log^6 n$ factor can be improved by future works. This term stems from the new concentration inequality we used as a black box. 
A refined analysis of this concentration
inequality may remove this factor. 
Here are two example settings of parameters that satisfy the conditions in \cref{thm:main}:
\begin{itemize}
    \item $q = \Theta(\frac{\log n}{n}), p = \Omega(\frac{k \log^6 n}{\sqrt{n}})$;
    \item $p - q = \Theta(1)$ and 
    $k = O(\frac{\sqrt{n}}{\log^6 n})$.
\end{itemize}
}

\subsection{Proof Outline and Technical Contributions} \label{sec:proof:outline}
Let $s_u$ denote the size of the cluster to which $u$ belongs. 
Assume for now that all $V_i$'s are of size roughly $n / k$.
Indeed, this happens with high probability inasmuch as the partition is uniformly sampled.

Our goal is to show that
there exists some threshold $\Delta > 0$ such that for every $u , v\in V$:
if $u, v \in V_\ell$ for some $\ell$, then $\norm{P_{\wh{G}_k}\wh{G}_u - P_{\wh{G}_k}\wh{G}_v}\leq \Delta / 4$;
otherwise, $\norm{P_{\wh{G}_k}\wh{G}_u - P_{\wh{G}_k}\wh{G}_v} \geq \Delta$.
Write $\eps(u) \eqdef \norm{P_{\wh{G}_k}\wh{G}_u - G_u}$.
Then  
$
  \abs{\norm{P_{\wh{G}_k}\wh{G}_u - P_{\wh{G}_k}\wh{G}_v} - \norm{G_v - G_u}} \leq  \eps(u) + \eps(v).
$
Note that $\norm{G_v - G_u} = 0$ if $u, v \in V_\ell$ for some $\ell$, otherwise, $\norm{G_v - G_u} = (p - q) \cdot \sqrt{s_u + s_v} > (p - q) \sqrt{n / k}$.
Therefore, setting \underline{$\Delta = 0.8(p - q)\sqrt{n / k}$}, 
it suffices to show that $\underline{\eps(u) \leq 0.1 (p - q)\sqrt{n / k}}$ for every $u \in V$.  

We decompose $\eps(u)$ into two terms: 
\begin{equation}\label{equ:decomposition}
  \eps(u)
  \leq \norm{P_{\wh{G}_k}(\wh{G}_u - G_u)} + \norm{(P_{\wh{G}_k} - I) G_u}  
  = \underbrace{\norm{P_{\wh{G}_k}E_u}}_{\text{"noise term"}} + \underbrace{\norm{(P_{\wh{G}_k} - I) G_u}}_{\text{``deviation term''}}.
\end{equation}
We shall bound the two terms from above separately.
Intuitively, the noise term is small means $P_{\wh{G}_k}$ reduces the noise, 
while the deviation term is small means $P_{\wh{G}_k}$ preserves the data.
\paragraph{Upper bound of the noise term} 
It is known that $P_{\wh{G}_k}$ (\resp $P_{G_k}$) can be write as a polynomial of $\wh{G}$ (\resp $G$).
By Weyl's inequality,
the eigenvalues of $\wh{G}$ are not too far from those of $G$.
Therefore, in our case, 
one can find a simple polynomial $\varphi$ which only depends on $G$,
such that 
$\varphi(\wh{G})$ (\resp $\varphi(G)$) is a good approximation of $P_{\wh{G}_k}$ (\resp $P_{G_k}$); this is formalized in \cref{lem:approx:proj}.
Then we have the following decomposition:
$
    \norm{P_{\wh{G}_k}E_u} 
    \leq 2\norm{\varphi(\wh{G})E_u} 
    \leq 2\norm{\varphi(G)E_u} + 2\norm{\left(\varphi(\wh{G}) - \varphi(G)\right)E_u},
$
where the first inequality follows from \cref{lem:approx:proj}, 
which roughly says $\varphi(\wh{G})$ is a good approximation of $P_{\wh{G}_k}$.
\begin{enumerate}
  \item The first term, $\norm{\varphi(G)E_u}$, is small with high probability.
  To see this, we use \cref{lem:approx:proj} again:
  $\norm{\varphi(G)E_u} \leq \frac{3}{2} \norm{P_{G_k} E_u}$.
  According to a known result (c.f. \cref{prop:independent:projection}), 
  $\norm{P_{G_k} E_u}$ is small with high probability, 
  largely because the projection $P_{G_k}$ and the vector $E_u$ are independent.
  \item The second term is the tricky part, 
  and we draw on an entrywise analysis.
  Namely,  we study every entry of $(\varphi(\wh{G}) - \varphi(G))E_u$,
  using the new inequality from \cite{MZ22}.
  See \cref{lem:poly:difference:bound} for more details.
\end{enumerate}
The upper bound for the noise term is encapsulated in \cref{lem:error:term}.

\paragraph{Upper bound of the deviation term}
The following argument is reminiscent of \cite{Vu-SVD}. 
Say $u \in V_\ell$.
Note that 
$
  G\chi_\ell = \sqrt{s_u} \cdot G_u
$
where $\chi_\ell  = \frac{1}{\sqrt{s_u}} \cdot 1_{V_\ell}$ is the normalized characteristic vector of $V_{\ell}$ (\ie $1_{V_\ell}(v) = 1 \iff v \in V_\ell$).
It follows that 
\begin{align*}
  \norm{(P_{\wh{G}_k} - I)G}_2 
  \leq \norm{(P_{\wh{G}_k} - I)\wh{G}}_2 + \norm{(P_{\wh{G}_k} - I)E}_2 
  \leq \norm{G - \wh{G}}_2 + \norm{(P_{\wh{G}_k} - I)E}_2
  \leq 2\norm{E}_2,
\end{align*} 
where the second inequality holds because $P_{\wh{G}_k}\wh{G}$ is the best $k$-rank approximation of $\wh{G}$ and $\mathrm{rank}(G) = k$,
and in the third inequality, we use $\norm{(P_{\wh{G}_k} - I)}_2 \leq 1$, 
as $P_{\wh{G}_k}$ is a projection matrix.
Therefore, 
\begin{equation} \label{equ:data-preserving}
    \norm{(P_{\wh{G}_k} - I)G_u} 
    = \frac{1}{\sqrt{s_u}} \norm{(P_{\wh{G}_k} - I)G\chi_u}
    \leq \frac{1}{\sqrt{s_u}}\norm{(P_{\wh{G}_k} - I)G}_2 
    \leq \frac{2\norm{E}_2}{\sqrt{s_u}}.
\end{equation}
A typical result in random matrix theory (c.f. \cref{prop:random:matrix:norm}) states that 
with high probability, 
$\norm{E}_2 = O(\sqrt{n})$. 
Combining \cref{equ:data-preserving} and $s_u \approx n / k$, 
we get
$
    \norm{(P_{\wh{G}_k} - I)G_u} = O(\sqrt{k}).
$
And by our assumption on $n$, 
we have $\sqrt{k} = o((p - q) n / k) = o(\Delta)$.

\paragraph{Technical contribution}
The major novelty of our analysis is using the polynomial $\varphi$. 
\cite{MZ22} used a centering step to make the mean matrix nicely structured,
while in our analysis, we used polynomial approximation to address this issue. 
Another difference is that in \cite{MZ22}, the centering step appears explicitly in the algorithm. By contrast, our polynomial approximation only appears in the analysis --- the algorithm is vanilla.

As a byproduct, we developed new techniques for 
studying \emph{eigenspace perturbation}, a typical topic in random matrix theory.
Our high-level idea is “polynomial approximation + entrywise analysis”. 
That is, we reduce the analysis of eigenspace perturbation to the analysis of a simple polynomial (of matrix) under perturbation.
We have more tools to deal with the latter.

\subsection{Discussion and Future Directions}

In this paper, we studied the behavior of vanilla-SVD in the SSBM, a benchmark signal-plus-noise model widely studied in random matrix theory. We showed that vanilla-SVD indeed filters noise in the SSBM. In fact, our analysis technique, `polynomial approximation + entrywise analysis', is not very limited to SSBM.
\fix{A direct and interesting question yet to be answered is: Can our method be extended to prove that vanilla-SVD works in the general SBM where partitions are not uniformly sampled and edges appear with different probabilities?
Moreover, our method may be useful for analyzing some other realistic, probabilistic models such as the factor model --- a model which has been widely used in economics and model portfolio theory.}

In the long term, it would be very interesting to understand the behavior of vanilla spectral algorithms on real data: 1) Why does it succeed in some applications? 2) How could we fix it if it has failed in other cases?
A deeper understanding of vanilla spectral algorithms will provide guidelines for using them in many machine learning tasks.

\section{Preliminaries}

\paragraph{Notations}
Let $\one_n$ denote the $n$-dimensional vector whose entries are all 1's,
and let $J_n$ be the $n \times n$ matrix whose entries are all 1's.
Let $s_u$ denote the size of the cluster to which $u$ belongs.
For a matrix $A$, $A[i]$ denotes the row of $A$ indexed by $i$, 
and $A_i$ denotes the column indexed by $i$; $\lambda_i(A)$ is the $i$-th largest eigenvalue of $A$;
let $P_{A_k}$ denote the orthogonal projection matrix onto the space spanned by the first $k$ eigenvectors of $A$.
For a vector $x \in \R^n$, $\norm{x} \eqdef \sqrt{x_1^2 + \cdots + x_n^2}$ denotes the Euclidean norm.

\begin{definition}[Matrix operator norms]
  Let $A \in \R^{n \times n}$. Define 
    $\norm{A}_2 \eqdef \max_{\norm{x} = 1} \norm{Ax}$ and 
    $\norm{A}_{2 \to \infty} \eqdef \max_{x : \norm{x} = 1} \norm{Ax}_{\infty}$.
\end{definition}

\begin{proposition}[\eg \cite{CTP2019Two-to-infinity}]
  For all matrices $A, B \in \R^{n \times n}$, it holds that 
  (1) $\norm{A}_{2 \to \infty} = \max_{i \in [n]} \norm{A[i]}$; 
  (2) $\norm{AB}_{2 \to \infty} \leq \norm{A}_{2 \to \infty} \norm{B}_2$.
\end{proposition}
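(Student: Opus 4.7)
The plan is to handle each part by unwinding the definition of $\norm{\cdot}_{2 \to \infty}$ and then applying one standard fact (duality of $\ell_2$ for part~(1); the defining inequality of the operator $2$-norm for part~(2)).

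For part~(1), I would start from the definition and rewrite the $\ell_\infty$-norm coordinatewise:
\begin{equation*}
\norm{A}_{2\to\infty} = \max_{\norm{x}=1} \norm{Ax}_\infty = \max_{\norm{x}=1} \max_{i \in [n]} \abs{\inner{A[i]}{x}}.
\end{equation*}
Since both ranges are finite, the two maxima can be swapped, giving $\max_{i \in [n]} \max_{\norm{x}=1} \abs{\inner{A[i]}{x}}$. The inner maximum is exactly $\norm{A[i]}$ by Cauchy--Schwarz, with equality attained at $x = A[i]/\norm{A[i]}$ (or at any unit vector if the row is zero). This yields $\norm{A}_{2\to\infty} = \max_{i\in[n]} \norm{A[i]}$.

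For part~(2), I would first record the homogeneous form of the definition: for every $y \in \R^n$, $\norm{Ay}_\infty \leq \norm{A}_{2\to\infty} \cdot \norm{y}$ (this is immediate by scaling, interpreting the bound as $0 \leq 0$ when $y = 0$). Apply this with $y = Bx$ for an arbitrary unit vector $x$, then use the defining inequality $\norm{Bx} \leq \norm{B}_2$ to obtain
\begin{equation*}
\norm{ABx}_\infty = \norm{A(Bx)}_\infty \leq \norm{A}_{2\to\infty} \cdot \norm{Bx} \leq \norm{A}_{2\to\infty} \cdot \norm{B}_2.
\end{equation*}
Taking the maximum over $\norm{x} = 1$ concludes (2).

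There is no real obstacle; this is a textbook fact and the paper even cites \cite{CTP2019Two-to-infinity} for it. The only two sanity checks are that the max-swap in (1) is legitimate (it is, because both are finite maxima of nonnegative reals, so equivalently suprema of the same nonnegative function of $(i,x)$), and that the homogeneous extension used in (2) handles the $y = 0$ case. Both are trivial, so the whole proof should fit in a few lines.
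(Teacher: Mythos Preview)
Your proof is correct and entirely standard. The paper does not actually prove this proposition; it is stated in the Preliminaries with only a citation to \cite{CTP2019Two-to-infinity}, so there is nothing to compare against.
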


\begin{proposition}[Weyl's inequality]
  For all $A , E \in \R^{n \times n}$, we have 
  $\abs{\lambda_i(A) - \lambda_i(A + E)} \leq \norm{E}_2$.
\end{proposition}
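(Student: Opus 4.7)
The plan is to establish Weyl's inequality via the Courant--Fischer min-max characterization of eigenvalues, which is the cleanest route when $A$ and $E$ are real symmetric (the relevant case here, since $\wh{G}$, $G$, and $E$ are all symmetric in the SSBM setup). Recall that for a symmetric matrix $M \in \R^{n \times n}$,
\begin{equation*}
    \lambda_i(M) \;=\; \max_{\dim S = i}\ \min_{\substack{x \in S \\ \norm{x} = 1}} x^\top M x,
\end{equation*}
where the max is over $i$-dimensional linear subspaces $S \subseteq \R^n$.

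Given this characterization, I would first prove the one-sided bound $\lambda_i(A+E) \leq \lambda_i(A) + \norm{E}_2$. Fix any $i$-dimensional subspace $S$ and any unit vector $x \in S$, and write $x^\top (A+E) x = x^\top A x + x^\top E x$. Since $|x^\top E x| \leq \norm{E x} \cdot \norm{x} \leq \norm{E}_2$, we obtain the elementary inequality
\begin{equation*}
    \min_{x \in S,\ \norm{x}=1} x^\top (A+E) x \;\leq\; \min_{x \in S,\ \norm{x}=1} x^\top A x \;+\; \norm{E}_2,
\end{equation*}
which is justified by evaluating the left-hand side at any minimizer $x_\star$ of $x \mapsto x^\top A x$ over the unit sphere of $S$ and then bounding $x_\star^\top E x_\star$ by $\norm{E}_2$. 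Taking the maximum over all $i$-dimensional subspaces $S$ on both sides yields $\lambda_i(A + E) \leq \lambda_i(A) + \norm{E}_2$.

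For the reverse direction $\lambda_i(A) \leq \lambda_i(A+E) + \norm{E}_2$, I would simply apply the inequality just established with $A$ replaced by $A+E$ and $E$ replaced by $-E$, noting that $\norm{-E}_2 = \norm{E}_2$. Combining the two one-sided bounds gives $\abs{\lambda_i(A) - \lambda_i(A+E)} \leq \norm{E}_2$.

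I do not anticipate any serious obstacle: the argument reduces to the Courant--Fischer characterization combined with the variational definition of the operator norm. The only minor subtlety is that the statement implicitly assumes the eigenvalues are real and ordered, which is automatic in this paper because every matrix to which the proposition will be applied is real symmetric. If one preferred to avoid min-max entirely, an alternative route would be to argue via the intermediate value / degree argument applied to characteristic polynomials, but Courant--Fischer is strictly cleaner and is what I would use.
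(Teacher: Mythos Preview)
Your proof is correct and is the standard Courant--Fischer argument for Weyl's inequality in the symmetric case. The paper, however, does not supply its own proof of this proposition: it is stated in the Preliminaries section as a well-known fact and used as a black box, so there is nothing to compare against beyond noting that your approach is the textbook one.
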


\begin{proposition}[Norm of a random matrix \cite{Vu-SVD}] \label{prop:random:matrix:norm}
  There is a constant $C_0>0$. Let $E$ be a symmetric matrix whose upper diagonal entries $e_{ij}$ are independent random
  variables where $e_{ij} = 1-p_{ij}$ or $-p_{ij}$ with probabilities $p_{ij}$ and $1-p_{ij}$ respectively, where $p_{ij}\in[0,1]$. Let $\sigma^2 := \max_{ij} \{p_{ij}(1-p_{ij})\}$. If $\sigma^2\geq C_0 \log n/n$, then
  $
  \Pr[\|E\|_2\geq C_0\sigma n^{1/2} ] \leq n^{-3}.
$
\end{proposition}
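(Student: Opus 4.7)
The plan is to combine a sharp bound on the expected spectral norm $\E\norm{E}_2$ obtained via the moment (trace) method with a concentration inequality for convex Lipschitz functions of independent bounded variables. The hypothesis $\sigma^2 \geq C_0 \log n / n$ plays only a supporting role: it ensures that the Gaussian-type deviation of $\norm{E}_2$ around its mean, which is of order $O(\sqrt{\log n})$ at the required confidence level, is dominated by the mean bound $O(\sigma\sqrt{n})$.

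For the mean, I would carry out the trace method. For any positive integer $k$, the deterministic bound $\norm{E}_2^{2k} \leq \trace(E^{2k})$ and linearity of expectation give
\begin{equation*}
  \E\bigl[\trace(E^{2k})\bigr] \;=\; \sum_{i_0, i_1, \dots, i_{2k-1}} \E\bigl[e_{i_0 i_1}\, e_{i_1 i_2} \cdots e_{i_{2k-1} i_0}\bigr],
\end{equation*}
a sum indexed by closed walks of length $2k$ on $[n]$. By independence and centering of the upper-triangular entries, only walks in which every distinct edge is traversed at least twice contribute in expectation. The classical Wigner/Füredi--Komlós encoding of such walks by their spanning trees and ``excess'' edges, combined with $\E[e_{ij}^2] \leq \sigma^2$ and $\abs{e_{ij}} \leq 1$, yields a bound of the form $\E[\trace(E^{2k})] \leq n \cdot (C\sigma\sqrt{n})^{2k}$ for any $k = o(\sqrt{n})$. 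Applying Markov's inequality with $k \asymp \log n$ then gives $\E\norm{E}_2 \leq C_1 \sigma \sqrt{n}$.

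For the second step, I would invoke Talagrand's concentration inequality for convex, $L$-Lipschitz functions of independent variables in $[-1,1]$. The map $(e_{ij})_{i \leq j} \mapsto \norm{E}_2$ is convex, and its Lipschitz constant with respect to the Euclidean norm on the entries is at most $\sqrt{2}$, since $\abs{\norm{E}_2 - \norm{E'}_2} \leq \norm{E - E'}_F$. Talagrand's inequality then yields $\Pr\bigl[\,\norm{E}_2 \geq M(\norm{E}_2) + t\,\bigr] \leq 4\exp(-ct^2)$, where $M$ denotes the median. Since $M(\norm{E}_2) \leq 2\E\norm{E}_2$ by Markov, choosing $t = \Theta(\sqrt{\log n})$ gives failure probability at most $n^{-3}$, and combining with Step~1 produces $\norm{E}_2 \leq 2C_1\sigma\sqrt{n} + O(\sqrt{\log n})$ with the desired probability. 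The hypothesis $\sigma^2 \geq C_0 \log n / n$ then absorbs the deviation term into the mean term, yielding $\norm{E}_2 \leq C_0 \sigma \sqrt{n}$ for a sufficiently large constant $C_0$.

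The main obstacle is the combinatorial bookkeeping in the trace-method step: classifying closed walks of length $2k$ by their tree skeleton and counting them so that the sharp $(\sigma\sqrt{n})^{2k}$ scaling emerges without spurious logarithmic factors. Controlling walks that revisit vertices and edges many times requires the Füredi--Komlós marking scheme, and any stray $\log n$ in the intermediate estimates would propagate into the final bound and force an unnecessary strengthening of the hypothesis on $\sigma$. Once the moment bound is in hand cleanly, the Talagrand step and the assumption on $\sigma$ are essentially cosmetic.
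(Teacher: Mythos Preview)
The paper does not prove this proposition; it is quoted from \cite{Vu-SVD} as a black box in the preliminaries, so there is no in-paper argument to compare against. Your outline---moment method for the mean, Talagrand for concentration---is a standard route and is close in spirit to what Vu does in the cited reference.

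That said, there is a genuine misconception in your account of where the hypothesis $\sigma^2 \geq C_0 \log n / n$ enters. You say it ``plays only a supporting role,'' needed merely to absorb the $O(\sqrt{\log n})$ Talagrand fluctuation into the mean bound $O(\sigma\sqrt{n})$. This is wrong: the hypothesis is already essential for the moment bound itself. The claimed inequality $\E[\trace(E^{2k})] \leq n(C\sigma\sqrt{n})^{2k}$ for $k = o(\sqrt{n})$ is \emph{false} without a lower bound on $\sigma$. Take $p_{ij} \equiv 1/n$, so that $\sigma\sqrt{n} = \Theta(1)$; then $\norm{E}_2$ is governed by the maximum-degree vertex and is of order $\sqrt{\log n/\log\log n}$ with high probability, which forces $\E[\trace(E^{2k})] \geq \E[\norm{E}_2^{2k}] = n^{\Theta(\log\log n)}$ for $k = \log n$---far larger than $n \cdot C^{2k} = n^{O(1)}$. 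The F\"uredi--Koml\'os tree walks do contribute $O(n(C\sigma\sqrt{n})^{2k})$, but walks that repeatedly traverse a small set of edges are controlled only by $\E[e_{ij}^m] \leq \sigma^2$ (not $\sigma^m$), and their combinatorial count grows with $k$; the condition $n\sigma^2 \geq C_0 \log n$ is exactly what makes these excess terms negligible when $k \asymp \log n$. So the ``combinatorial bookkeeping'' you flag as the main obstacle is also where the hypothesis is genuinely consumed, not in a cosmetic final step. Incidentally, once the moment bound is established correctly under the hypothesis, Markov's inequality at $k \asymp \log n$ already delivers the $n^{-3}$ tail directly, making the Talagrand step redundant.
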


\begin{proposition}[Projection of a random vector, lemma 2.1 in \cite{Vu-SVD}] \label{prop:independent:projection}
There exists a constant $C_1$ such that the following holds. 
Let $X=\left(\xi_1, \ldots, \xi_n\right)$ be a random vector in $\mathbb{R}^n$ whose coordinates $\xi_i$ are independent random variables with mean 0 and variance at most $\sigma^2 \leqslant 1$. Assume furthermore that the $\xi_i$ are bounded by 1 in absolute value. Let $H$ be a subspace of dimension $d$ and let $\Pi_H \xi$ be the length of the orthogonal projection of $\xi$ onto $H$. Then
$
  \pr{\Pi_H X \geq \sigma \sqrt{d}+C_1 \sqrt{\log n}} \leq n^{-3} .
$
\end{proposition}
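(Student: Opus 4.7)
The plan is to combine a second-moment bound for the mean with Talagrand's Lipschitz concentration inequality for the tail. Let $P = P_H$ denote the orthogonal projector onto $H$, so that $\Pi_H X = \|PX\|$. The map $f : \R^n \to \R$ defined by $f(x) = \|Px\|$ is convex (it is the composition of a norm with a linear map) and $1$-Lipschitz in the Euclidean metric, since $\|Px - Py\| \leq \|x - y\|$.

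First I would control the expectation via the second moment. The coordinates $\xi_i$ are independent with mean zero and variance at most $\sigma^2$, so the covariance matrix $\Sigma = \E[XX^T]$ is diagonal with entries at most $\sigma^2$, giving $\Sigma \preceq \sigma^2 I$. Therefore
\[
  \E[\|PX\|^2] = \trace(P \Sigma P) = \trace(P \Sigma) \leq \sigma^2 \trace(P) = \sigma^2 d,
\]
and Jensen's inequality yields $\E[\|PX\|] \leq \sigma\sqrt{d}$.

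Next I would invoke Talagrand's convex-Lipschitz concentration inequality for product measures on $[-1,1]^n$: since $|\xi_i| \leq 1$ and $f$ is convex and $1$-Lipschitz, there is a universal constant $c > 0$ such that
\[
  \Pr\bigl[\,|f(X) - \median f(X)| \geq t\,\bigr] \leq 4\exp(-c t^2).
\]
Integrating this tail gives $|\median f(X) - \E f(X)| \leq C_0$ for some absolute constant $C_0$. Choosing $C_1$ large enough that $C_1\sqrt{\log n}$ absorbs both $C_0$ and makes $c(C_1/2)^2 \log n \geq 4$ for all $n$, one obtains
\[
  \Pr\bigl[\|PX\| \geq \sigma\sqrt{d} + C_1 \sqrt{\log n}\bigr] \leq n^{-3}.
\]

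The main obstacle is selecting the right concentration inequality. Since the $\xi_i$ are only assumed bounded (not sub-Gaussian), a plain bounded-differences argument (McDiarmid) would incur an unwanted $\sqrt{n}$ factor in the deviation and fail to give a $\sqrt{\log n}$ tail. One therefore needs a dimension-free, geometric tool---Talagrand's convex-Lipschitz inequality, or equivalently a log-Sobolev / concentration-of-measure result on $[-1,1]^n$---that exploits the convexity of $f$ to extract Gaussian tails at a scale independent of the ambient dimension $n$.
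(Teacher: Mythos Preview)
Your argument is correct: the second-moment bound $\E[\|PX\|^2] = \trace(P\Sigma) \leq \sigma^2 d$ together with Talagrand's convex-Lipschitz concentration on $[-1,1]^n$ is exactly the right tool, and the passage from median to mean via the integrated tail is standard. One minor remark: since the coordinates lie in $[-1,1]$ (an interval of length $2$) rather than $[0,1]$, the constant in the exponent of Talagrand's inequality picks up a factor of $4$, but this is absorbed into $C_1$ as you intend.

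Note, however, that the paper does not itself prove this proposition: it is stated in the Preliminaries as a quoted result (lemma~2.1 of \cite{Vu-SVD}) and used as a black box. So there is no proof in this paper to compare against. Your write-up is in fact the standard proof one would expect to find in the cited reference.
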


\begin{proposition} \label{prop:close:to:one}
  For $a \in [0 , 2]$ and $r \in \N$,
  if $\abs{a - 1} \leq \delta < \frac{1}{2r}$, then $\abs{a^r - 1} \leq 2r\delta$.
\end{proposition}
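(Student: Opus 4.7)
The plan is to use the standard telescoping factorization
\[
  a^r - 1 = (a - 1)\bigl(1 + a + a^2 + \cdots + a^{r-1}\bigr),
\]
and then to bound the second factor via the hypothesis $r\delta < 1/2$. Specifically, since $a \in [0,2]$, every power $a^i$ is non-negative, so the absolute value of the geometric-like sum equals the sum itself:
\[
  \bigl\lvert 1 + a + \cdots + a^{r-1}\bigr\rvert = \sum_{i=0}^{r-1} a^i.
\]

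Next I would use the upper bound $a \leq 1 + \delta$ (which is immediate from $|a-1|\leq \delta$) to estimate each summand by $a^i \leq (1+\delta)^{r-1}$, giving $\sum_{i=0}^{r-1} a^i \leq r(1+\delta)^{r-1}$. Multiplying by $|a-1|\leq \delta$ then yields
\[
  \abs{a^r - 1} \leq \delta \cdot r \cdot (1+\delta)^{r-1}.
\]
So the whole claim reduces to the elementary bound $(1+\delta)^{r-1} \leq 2$, after which we get $|a^r-1|\leq 2r\delta$ as desired.

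To establish $(1+\delta)^{r-1} \leq 2$, I would invoke the standard inequality $(1+x)^m \leq e^{mx}$ for $x \geq 0$ and $m \in \N$, giving $(1+\delta)^{r-1} \leq e^{(r-1)\delta} \leq e^{r\delta}$; by hypothesis $r\delta < 1/2$, so $e^{r\delta} < e^{1/2} < 2$.

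There is no real obstacle here — the only mild subtlety is making sure the case $a \in [0,1]$ is handled together with $a \in [1,2]$, and this is taken care of automatically because the sum $\sum_{i=0}^{r-1} a^i$ is still dominated by $r \max(1, (1+\delta)^{r-1}) = r(1+\delta)^{r-1}$ regardless of whether $a \le 1$ or $a \ge 1$. (One could alternatively split cases and use Bernoulli's inequality $(1-\delta)^r \geq 1-r\delta$ for $a\leq 1$, but the unified bound above handles both cases at once.)
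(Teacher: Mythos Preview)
Your argument is correct. It differs from the paper's proof in approach: the paper splits into cases, using Bernoulli's inequality $(1+x)^r \ge 1+rx$ for $a\le 1$, and for $a>1$ bounding the binomial expansion $(1+x)^r=\sum_i \binom{r}{i}x^i$ termwise by the geometric series $\sum_i (rx)^i = 1/(1-rx)\le 1+2r\delta$. Your telescoping factorization $a^r-1=(a-1)\sum_{i<r}a^i$ combined with $(1+\delta)^{r-1}\le e^{r\delta}<e^{1/2}<2$ handles both cases uniformly and is arguably cleaner; the paper's version, on the other hand, avoids the transcendental inequality and stays purely algebraic. Either way the content is the same elementary estimate, and your proof goes through without issue.
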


\section{Analysis of Vanilla SVD Algorithm}

Write $s_i \eqdef |V_i|$.
We say the partition $V_1, \dots, V_k$ is \emph{balanced} 
if 
$
    \left(1 - \frac{1}{16 \log n}\right)\frac{n}{k} \leq s_i \leq \left(1 + \frac{1}{16 \log n}\right)\frac{n}{k}, \forall i \in [k].    
$
By Chernoff bound, 
the partition $V_1, \dots, V_k$ is balanced with probability at least $1 - n^{-1}$; hence, we assume that the partition is balanced in the following argument. Since $\sigma^2 \geq C \log n / n$, the event $\norm{E} = O(\sqrt{n})$ holds with high probability (see \cref{prop:random:matrix:norm}).

Recall the decomposition into deviation term and noise term in \cref{equ:decomposition}.
We first state our upper bound of the deviation term,
which readily follows from the argument in \cref{sec:proof:outline}, 
and the complete proof is in \cref{appendix:deviation:term}.
\begin{lemma}[Upper bound of deviation term]\label{lem:data:term}
  Let $C_0$ be the constant in \cref{prop:random:matrix:norm}.
  If the partition is balanced and $n \geq 10^4 \cdot C_0^2 \frac{ k^2 \sigma^2}{(p - q)^2}$,
  then with probability at least $1 - n^{-3}$ we have 
  $
    \norm{(P_{\wh{G}_k} - I)G_u} \leq 0.04(p - q)\sqrt{n / k}, \forall u \in V.
  $
\end{lemma}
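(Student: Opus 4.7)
The plan follows the sketch given in \cref{sec:proof:outline}. Fix an arbitrary $u \in V$, and suppose $u \in V_\ell$, so $s_u = s_\ell = |V_\ell|$. The first observation is that every column $G_v$ with $v \in V_\ell$ is identical, so if $\chi_\ell \defeq \frac{1}{\sqrt{s_u}} \mathbf{1}_{V_\ell}$, then $G\chi_\ell = \sqrt{s_u}\, G_u$. Consequently,
\begin{equation*}
\norm{(P_{\wh{G}_k} - I) G_u} \;=\; \frac{1}{\sqrt{s_u}}\norm{(P_{\wh{G}_k} - I) G \chi_\ell} \;\leq\; \frac{\norm{\chi_\ell}}{\sqrt{s_u}}\norm{(P_{\wh{G}_k} - I) G}_2 \;=\; \frac{1}{\sqrt{s_u}} \norm{(P_{\wh{G}_k} - I) G}_2,
\end{equation*}
since $\norm{\chi_\ell} = 1$. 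The task thereby reduces to bounding $\norm{(P_{\wh{G}_k} - I) G}_2$ and $s_u$ from below.

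The next step is to bound the spectral norm on the right. Writing $G = \wh G - E$ and using the triangle inequality,
\begin{equation*}
\norm{(P_{\wh{G}_k} - I) G}_2 \;\leq\; \norm{(P_{\wh{G}_k} - I) \wh G}_2 + \norm{(P_{\wh{G}_k} - I) E}_2.
\end{equation*}
For the first term, $P_{\wh{G}_k} \wh G$ is the best rank-$k$ approximation of $\wh G$ in spectral norm (Eckart--Young), and since $G$ has rank at most $k$ (its column space is spanned by the $k$ indicator vectors of the clusters), $\norm{(P_{\wh{G}_k} - I)\wh G}_2 \leq \norm{\wh G - G}_2 = \norm{E}_2$. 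For the second term, $P_{\wh{G}_k} - I$ is the orthogonal projection onto the orthogonal complement of the top-$k$ eigenspace, so it has operator norm at most $1$, yielding $\norm{(P_{\wh{G}_k} - I) E}_2 \leq \norm{E}_2$. Combining gives $\norm{(P_{\wh{G}_k} - I) G}_2 \leq 2 \norm{E}_2$.

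Finally I would plug in the random matrix bound and the balanced assumption. By \cref{prop:random:matrix:norm}, with probability at least $1 - n^{-3}$ one has $\norm{E}_2 \leq C_0 \sigma \sqrt{n}$. The balanced condition gives $s_u \geq (1 - \frac{1}{16 \log n}) n/k$, so $\sqrt{s_u} \geq \sqrt{n/k}\cdot\sqrt{1 - \frac{1}{16\log n}} \geq \frac{1}{2}\sqrt{n/k}$ comfortably (for $n$ at least a small constant). Putting everything together:
\begin{equation*}
\norm{(P_{\wh{G}_k} - I) G_u} \;\leq\; \frac{2 C_0 \sigma \sqrt{n}}{\sqrt{s_u}} \;\leq\; \frac{2 C_0 \sigma \sqrt{n}}{\sqrt{(1 - 1/(16 \log n))\, n/k}} \;\leq\; 3 C_0 \sigma \sqrt{k}.
\end{equation*}
The hypothesis $n \geq 10^4 C_0^2 k^2 \sigma^2 / (p-q)^2$ rearranges to $C_0 \sigma \sqrt{k} \leq \frac{1}{100}(p-q)\sqrt{n/k}$, so the right-hand side is at most $0.04 (p-q)\sqrt{n/k}$, as desired. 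A union bound over $u \in V$ is not needed because the bound on $\norm{E}_2$ is a single event that uniformly controls all $u$.

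There is no serious obstacle here: every step is one of (i) a standard spectral inequality, (ii) the Eckart--Young optimality of $P_{\wh{G}_k}\wh G$, or (iii) invoking the black-box concentration bound from \cref{prop:random:matrix:norm}. The only care required is bookkeeping the constants so that the $10^4 C_0^2$ factor comfortably absorbs both the factor of $2$ from the operator-norm split and the mild slack lost in replacing $s_u$ by $n/k$.
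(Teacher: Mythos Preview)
Your proof is correct and follows essentially the same approach as the paper: you use the identity $G\chi_\ell = \sqrt{s_u}\,G_u$, the bound $\norm{(P_{\wh{G}_k}-I)G}_2 \le 2\norm{E}_2$ via Eckart--Young plus the projection norm bound, then invoke \cref{prop:random:matrix:norm} and the balanced condition on $s_u$, exactly as in the paper's \cref{equ:data-preserving} and \cref{appendix:deviation:term}. Your constant bookkeeping and your remark that no union bound over $u$ is needed are both accurate.
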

Section 3.1 and Section 3.2 lead to an upper bound of the noise term, 
and Section 3.3 is the proof of main theorem.

\subsection{An Approximation of $P_{G_k}$ and  $P_{\wh{G}_k}$}
In order to give some intuition on the choice of $\varphi$,
we first analyze the spectrum of $G$, 
and the result is summed up in \cref{thm:eigenvalues}.

\paragraph{The eigenvalues of $G$}
Note that $G = H + q\one_n\one_n^\top$, where 
$
  H = \begin{psmallmatrix}
    (p - q)J_{s_1} &\zero  &\zero &\zero \\
    \zero & (p - q)J_{s_2} &\zero  &\zero \\
    \zero&\zero & \ddots &\zero \\
    \zero &\zero &\zero &(p - q)J_{s_k}
  \end{psmallmatrix}.
$
Without loss of generality, assume that $s_1 \geq s_2 \geq \cdots \geq s_k$.
It is easy to see that the eigenvalues of $H$ are 
$
  (p - q)s_1, \dots, (p - q)s_k, 0.
$
Viewing $G$ as a rank-one perturbation of $H$, 
we have the following theorem that characterizes eigenvalues of $G$.
Its proof, in \cref{appendix:polynomial:approx}, readily follows from a theorem in \cite{BNS78Rank1Perturbation}, 
which studies eigenvalues under rank-one perturbation.
\begin{theorem} \label{thm:eigenvalues}
  Write $s_i \eqdef |V_i|$ and assume that $s_1 \geq s_2 \geq \cdots \geq s_k$. 
  Define $\delta_i \eqdef \lambda_i(G) - (p - q)s_i$, then
    (1) $\delta_i \geq 0$ and $\sum_{i = 1}^k \delta_i = nq$;
    (2) $\lambda_1(G) \geq nq + (p - q) \frac{n}{k}$, and hence 
    $
      \sum_{i = 2}^k \delta_i \leq (p - q)(s_1 - \frac{n}{k}).
    $
\end{theorem}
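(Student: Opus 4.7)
}
The plan is to exploit the decomposition $G = H + q\one_n\one_n^\top$, treating $G$ as a rank-one PSD perturbation of the block-diagonal matrix $H$, and then to combine Cauchy interlacing (the content of the cited theorem of Bunch--Nielsen--Sorensen) with an explicit rank and trace calculation for $G$.

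For part (1), I would first invoke interlacing to conclude $\lambda_i(G) \geq \lambda_i(H)$ for all $i$. Since $\lambda_i(H) = (p - q)s_i$ for $i \leq k$, this yields $\delta_i \geq 0$. To pin down the sum, I would next argue $\rank(G) = k$: the column space of $H$ is the $k$-dimensional span of the cluster indicators $\one_{V_1}, \dots, \one_{V_k}$, which already contains $\one_n = \sum_\ell \one_{V_\ell}$, so the rank-one piece $q\one_n\one_n^\top$ adds no new directions and $\rank(G) \leq k$. Conversely, the $k$ columns $G_{u_\ell} = (p - q)\one_{V_\ell} + q\one_n$ (one per cluster) are linearly independent because $p \neq q$, giving $\rank(G) \geq k$. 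Therefore $\lambda_i(G) = 0$ for all $i > k$, and using $\trace(G) = np$ (the diagonal of $G$ equals $p$ everywhere),
\[
\sum_{i=1}^k \delta_i \;=\; \sum_{i=1}^n \lambda_i(G) \;-\; (p - q)\sum_{\ell=1}^k s_\ell \;=\; np - (p - q)n \;=\; nq.
\]

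For part (2), I would bound $\lambda_1(G)$ from below via the Rayleigh quotient at $v = \one_n/\sqrt{n}$. A direct expansion gives
\[
v^\top G v \;=\; \frac{1}{n}\left((p - q)\sum_{\ell=1}^k s_\ell^2 + q n^2\right) \;\geq\; \frac{1}{n}\left((p - q)\cdot\frac{n^2}{k} + q n^2\right) \;=\; nq + (p - q)\frac{n}{k},
\]
where $\sum_\ell s_\ell^2 \geq n^2/k$ by Cauchy--Schwarz. Hence $\delta_1 \geq nq - (p - q)(s_1 - n/k)$, and subtracting this from the identity $\sum_{i=1}^k \delta_i = nq$ established in part (1) gives the claimed upper bound $\sum_{i=2}^k \delta_i \leq (p - q)(s_1 - n/k)$.

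The main obstacle I anticipate is the rank/trace step in part (1): interlacing alone only gives $\sum_{i=1}^k \delta_i \leq nq$, because a priori some of the $n - k$ zero eigenvalues of $H$ could be bumped to strictly positive values by the rank-one perturbation. The structural observation $\one_n \in \Span\{\one_{V_1}, \dots, \one_{V_k}\}$ is exactly what rules this out; once it is in hand, the rest is routine trace accounting plus a one-line Rayleigh quotient bound.
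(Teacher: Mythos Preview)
Your proposal is correct and essentially follows the paper's proof: the Rayleigh-quotient argument for part (2) is identical, and for part (1) the paper invokes the Bunch--Nielsen--Sorensen rank-one perturbation result as a black box, which packages precisely the interlacing ($\delta_i\ge 0$) and trace-sum ($\sum_i \delta_i = nq$) facts that you unpack by hand. Your explicit $\rank(G)=k$ argument via $\one_n \in \Span\{\one_{V_\ell}\}$ is exactly the structural observation that makes the paper's application of BNS go through (it is why the perturbation lives entirely in the top-$k$ eigenspace of $H$), so there is no substantive difference in approach.
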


\paragraph{The choice of the polynomial $\varphi$}
Let $\mu \eqdef (p - q)\frac{n}{k}$,
and let $\psi(t)$ be the quadratic polynomial such that $\psi(\lambda_1(G)) = \psi(\mu) = 1, \psi(0) = 0$, \ie 
$
  \psi(t) \eqdef -\frac{1}{\lambda_1(G)\mu} (t - \lambda_1(G))(t - \mu) + 1 \eqdef At^2 + Bt, 
$
where $A = -\frac{1}{\lambda_1(G)\mu}, B = \frac{1}{\lambda_1(G)} + \frac{1}{\mu}$.
Finally, let $\varphi(t) \eqdef (\psi(t))^r$ where $r \eqdef \log n$.

Here we give some intuition for the choice of $\varphi$.
Let $\wh{G} = \sum_{i = 1}^n \wh{\lambda}_i v_i v_i^\top$ be the spectral decomposition of $\wh{G}$.
Then 
$
  \varphi(\wh{G}) = \sum_{i = 1}^n \varphi(\wh{\lambda}_i) v_i v_i^\top, P_{\wh{G}_k} = \sum_{i = 1}^{k} v_iv^\top.
$
The spectral decomposition of $\varphi(\wh{G}) - P_{\wh{G}_k}$ is 
$
  \varphi(\wh{G}) - P_{\wh{G}_k} = \sum_{i = 1}^k (\varphi(\wh{\lambda}_i) - 1) v_iv^\top + \sum_{i = k+1}^n \varphi(\lambda_i) v_iv^\top. 
$
Hence, 
\begin{equation} \label{equ:approx:projection}
  \begin{aligned}
    \norm{\varphi(\wh{G}) - P_{\wh{G}_k}}_2 
    = \max\{|\varphi(\wh{\lambda}_1) - 1|, \dots, |\varphi(\wh{\lambda}_k) - 1|,
    |\varphi(\wh{\lambda}_{k + 1})|, \dots, |\varphi(\wh{\lambda}_n)|\}.
  \end{aligned}
\end{equation}
Recall that $\wh{\lambda_i} -  \lambda_i(G)$ is bounded by Weyl's inequality.
Plus, when the partition is balanced, 
\cref{thm:eigenvalues} shows that the eigenvalues of $G$ is nicely distributed: 
Except for $\lambda_1(G)$, other eigenvalues are all close to $\mu$.
Hence, our choice of $\varphi$ makes $\norm{\varphi(\wh{G}) - P_{\wh{G}_k}}_2$ small, 
and thus $\varphi(\wh{G})$ is a good approximation of $P_{\wh{G}_k}$.
Formally, we have the following lemma.
\begin{lemma}[Polynomial approximation] \label{lem:approx:proj}
  Assume that the partition is balanced and   
  $n \geq 10^4 \cdot C_0^2 \cdot \frac{ k^2  \cdot p \cdot \log n}{(p - q)^2}$,
  where $C_0$ is the constant in \cref{prop:random:matrix:norm}.
  Then with probability at least $1 - n^{-3}$, 
  it holds that for all $x \in \R^n$, 
    $
      \frac{1}{2} \norm{P_{\wh{G}_k} x} \leq \norm{\varphi(\wh{G}) x} \leq \frac{3}{2}\norm{P_{\wh{G}_k}x} + \norm{x} / n^{\log \log n},
    $
  and 
    $
      \frac{1}{2} \norm{P_{G_k} x} \leq \norm{\varphi(G) x} \leq \frac{3}{2}\norm{P_{G_k}x}.
    $
\end{lemma}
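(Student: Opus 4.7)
My plan is to reduce both inequalities to pointwise control of $\psi$ at the eigenvalues of $\wh G$ and $G$ via the spectral decomposition, exploiting the fact that $\psi$ was engineered so that $\psi(\lambda_1(G))=\psi(\mu)=1$ and $\psi(0)=0$, matching the eigenvalue structure described by \cref{thm:eigenvalues}.

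The first step is eigenvalue control. On the event $\|E\|_2\leq C_0\sigma\sqrt n$, which by \cref{prop:random:matrix:norm} holds with probability at least $1-n^{-3}$, Weyl's inequality gives $|\wh\lambda_i-\lambda_i(G)|\leq C_0\sigma\sqrt n$ for every $i$. Combined with \cref{thm:eigenvalues} and balancedness, this yields $\lambda_1(G)\geq nq+\mu$, $|\lambda_i(G)-\mu|\leq \mu/(8\log n)$ for $2\leq i\leq k$, and $\lambda_i(G)=0$ for $i>k$ since $\rank(G)=k$. The second step derives pointwise bounds on $\varphi=\psi^r$. For $i\leq k$ I expand $\psi(\wh\lambda_i)-1=-(\wh\lambda_i-\lambda_1(G))(\wh\lambda_i-\mu)/(\lambda_1(G)\mu)$, split it into a noise-free piece $\psi(\lambda_i(G))-1$ bounded through the balancedness estimates above and a Weyl-perturbation piece bounded by $O(|\wh\lambda_i-\lambda_i(G)|/\mu)$ using that $\psi$ is quadratic, and check that the hypothesis on $n$ forces their sum below the threshold $1/(2r)=1/(2\log n)$ required by \cref{prop:close:to:one}, so $|\varphi(\wh\lambda_i)-1|\leq 2r\delta\leq 1/2$. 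For $i>k$, $\lambda_i(G)=0$ and $\psi(0)=0$ combined with Weyl give $|\psi(\wh\lambda_i)|=O(|\wh\lambda_i|/\mu)$, which the same hypothesis keeps strictly below $1$ by a factor $1/\log n$; raising to the $r$-th power yields $|\varphi(\wh\lambda_i)|\leq n^{-\log\log n}$.

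The third step converts these pointwise bounds into operator statements. Writing $\wh G=\sum_i \wh\lambda_i v_iv_i^\top$, I decompose $\varphi(\wh G)x = \sum_{i\leq k}\varphi(\wh\lambda_i)\langle v_i,x\rangle v_i + \sum_{i>k}\varphi(\wh\lambda_i)\langle v_i,x\rangle v_i$. The first sum lies in $\Span\{v_1,\ldots,v_k\}$ with squared norm $\sum_{i\leq k}\varphi(\wh\lambda_i)^2\langle v_i,x\rangle^2 \in [1/4,9/4]\cdot\|P_{\wh G_k}x\|^2$, while the second sum is orthogonal to that span with norm at most $n^{-\log\log n}\|x\|$. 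The Pythagorean identity gives the lower bound $\|\varphi(\wh G)x\|\geq \frac12 \|P_{\wh G_k}x\|$, and the triangle inequality gives the upper bound $\|\varphi(\wh G)x\|\leq \frac32 \|P_{\wh G_k}x\|+\|x\|/n^{\log\log n}$. The second pair, for $\varphi(G)$ and $P_{G_k}$, follows from the same template applied directly to $G$ and is strictly easier: $\lambda_i(G)=0$ for $i>k$ makes $\varphi(\lambda_i(G))=0$ so the orthogonal component vanishes and the $n^{-\log\log n}$ slack disappears, and there is no Weyl perturbation to absorb, so the balancedness estimate on $|\psi(\lambda_i(G))-1|$ alone suffices to invoke \cref{prop:close:to:one}.

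The main obstacle is the quantitative pointwise step for $i\leq k$: the admissible window $1/(2\log n)$ in \cref{prop:close:to:one} is tight, so the argument must simultaneously absorb the balancedness error coming from \cref{thm:eigenvalues} and the Weyl perturbation $C_0\sigma\sqrt n$, and the hypothesis on $n$ is calibrated precisely to make their sum fit inside this window; everything else is bookkeeping around the spectral decomposition.
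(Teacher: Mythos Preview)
Your proposal is correct and follows essentially the same route as the paper: the paper packages your pointwise eigenvalue bounds into a separate claim (\cref{claim:small:spectral:difference}) and then runs exactly the spectral-decomposition argument you outline in your third step. One small slip to tighten when you write it up: the hypothesis $\delta<1/(2r)$ of \cref{prop:close:to:one} only gives $|\varphi-1|\leq 2r\delta<1$, not $\leq 1/2$; you need the stronger bound $\delta\leq 1/(4r)=1/(4\log n)$ on $|\psi(\cdot)-1|$, which is what the paper's calculation actually establishes and what the hypothesis on $n$ (combined with balancedness giving $|\lambda_i(G)-\mu|\leq \mu/(8\log n)$) is calibrated to deliver.
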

\begin{proof}
  Let $G = \sum_{i = 1}^k \lambda_i u_i u_i^\top$(\resp $\wh{G} = \sum_{i = 1}^n \wh{\lambda}_i v_i v_i^\top$) \
  be the spectral decomposition of $G$ (\resp $\wh{G}$).
  We shall use the following claim.
  \begin{claim} \label{claim:small:spectral:difference} 
    The following holds with probability $1 - n^{-3}$ (over the choice of $E$):
     for every $i \in [k]$, $\abs{\varphi(\wh{\lambda}_i) - 1} < \frac{1}{2}, \abs{\varphi(\lambda_i) - 1} < \frac{1}{2}$; 
        and for every $i = k + 1, \dots, n$, $\abs{\varphi(\wh{\lambda_i})} < n^{-\log \log n}$.
  \end{claim}
  
  Fix $x \in \R^n$. On the one hand, 
  $\frac{1}{2} \leq \varphi(\wh{\lambda}_i) \leq \frac{3}{2}, \forall i \in [k]$, 
  and hence 
  $$
    \norm{\varphi(\wh{G}) x}^2 
    = \sum_{i = 1}^n \varphi(\wh{\lambda}_i)^2 \inner{x}{v_i}^2 
    \geq  \sum_{i = 1}^k \varphi(\wh{\lambda}_i)^2 \inner{x}{v_i}^2 
    \geq  \sum_{i = 1}^k \frac{1}{4} \inner{x}{v_i}^2 = \frac{1}{4}  \norm{P_{\wh{G}_k} x}^2,
  $$
  which means $\norm{\varphi(\wh{G}) x}  \geq \frac{1}{2} \norm{P_{\wh{G}_k} x}$.
  On the other hand,
  $$
    \norm{\varphi(\wh{G}) x}^2 
    = \sum_{i = 1}^n \varphi(\wh{\lambda}_i)^2 \inner{x}{v_i}^2 
    \leq \sum_{i = 1}^k \left(\frac{3}{2}\right)^2 \inner{x}{v_i}^2 + \sum_{i = k + 1}^n \frac{\inner{x}{v_i}^2 }{n^{2 \log\log n}} 
    \leq \frac{9}{4} \norm{P_{\wh{G}_k} x}^2 + \frac{\norm{x}^2}{n^{2\log \log n}}.
  $$
  Since $\sqrt{a + b} \leq \sqrt{a} + \sqrt{b}$,
  we have $\norm{\varphi(\wh{G}) x} \leq \frac{3}{2} \norm{P_{\wh{G}_k} x} + \frac{\norm{x}}{n^{\log \log n}}$. 
  This establishes the first part.

  Note that $\norm{\varphi(G) x} = \sqrt{\sum_{i = 1}^k \varphi(\lambda_i)^2 \inner{x}{u_i}^2}$
  and we also have $\frac{1}{2} \leq \varphi(\lambda_i) \leq \frac{3}{2}, \forall i \in [k]$,
  and thus similar argument goes for $G$. This finishes the proof of \cref{lem:approx:proj}.
  
  It remains to prove \cref{claim:small:spectral:difference}.
  The claim readily follows from the choice of $\varphi$ 
  and the fact that $\lambda_i, \wh{\lambda}_i$ are close.
  A complete proof can be found in \cref{appendix:polynomial:approx}.
\end{proof}

\subsection{The Upper Bound of the Noise Term}

According to \cref{equ:decomposition},
in order to derive an upper bound of $\norm{P_{G_k}E_u}$,
it remains to bound $\norm{\left(\varphi(\wh{G}) - \varphi(G)\right)E_u}$ from above.
This is done by the following lemma.
\begin{lemma} \label{lem:poly:difference:bound}
   Let $C_0$ be the constant in \cref{prop:random:matrix:norm}. 
   Assume that the partition is balanced and   
  $n \geq (100 + C_0)^2\cdot \frac{k^2 \cdot p \cdot \log^{12} n}{(p - q)^2}$.
  For every $u \in V$, it holds that 
  $$
    \pr[E]{\norm{\left(\varphi(\wh{G}) - \varphi(G)\right)E_u} \leq C_2( \sqrt{kp} \log^2 n) + \frac{1}{\log n})} 
    \geq 1 - O(n^{-2}),
  $$
  where $C_2 \eqdef 7 \cdot 10^6$ is a constant.
\end{lemma}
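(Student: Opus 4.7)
The plan is to use the non-commutative telescoping identity
\begin{equation*}
  \varphi(\wh{G}) - \varphi(G) \;=\; \psi(\wh{G})^r - \psi(G)^r \;=\; \sum_{i=0}^{r-1} \psi(\wh{G})^i\,\bigl(\psi(\wh{G}) - \psi(G)\bigr)\,\psi(G)^{r-1-i},
\end{equation*}
then bound each summand after applying both sides to $E_u$. By \cref{claim:small:spectral:difference}, every eigenvalue of $\psi(\wh{G})$ (\resp $\psi(G)$) lies in $[-1/\log n,\, 1+O(1/r)]$, so $\norm{\psi(\wh{G})}_2^i$ and $\norm{\psi(G)}_2^i$ remain $O(1)$ uniformly in $0 \leq i \leq r$. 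A direct expansion gives $\psi(\wh{G}) - \psi(G) = A(GE + EG + E^2) + BE$; since $\abs{A}, \abs{B} = O(1/\mu)$, \cref{prop:random:matrix:norm} yields
$
    \norm{\psi(\wh{G}) - \psi(G)}_2 \leq O(\norm{E}_2 / \mu) = O\bigl(k\sqrt{p}/((p-q)\sqrt{n})\bigr) = O(1/\log^6 n),
$
where the last equality uses the hypothesis on $n$.

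The bulk of the sum, namely $0 \leq i \leq r-2$, is handled by these operator-norm estimates. For $j = r - 1 - i \geq 1$, the polynomial $\psi(G)^j$ vanishes on $\ker(G)$ (since $\psi(0)=0$) and has eigenvalues in $[\tfrac{1}{2}, \tfrac{3}{2}]$ on the top-$k$ eigenspace of $G$, so $\norm{\psi(G)^j E_u} \leq \tfrac{3}{2}\norm{P_{G_k} E_u}$. Because $G$ is deterministic, $P_{G_k}$ is independent of $E_u$, and \cref{prop:independent:projection} gives $\norm{P_{G_k} E_u} = O(\sqrt{kp} + \sqrt{\log n})$ with probability $\geq 1 - n^{-3}$. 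Multiplying the three factors and summing the $r-1$ terms yields a total of $O\bigl((\sqrt{kp} + \sqrt{\log n})/\log^5 n\bigr)$, well inside the target bound.

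The genuinely hard case is the extreme term $i = r - 1$, in which $\psi(G)^0 = I$ leaves us with $\psi(\wh{G})^{r-1}\bigl(\psi(\wh{G}) - \psi(G)\bigr) E_u$. A blunt operator-norm bound here inserts $\norm{E_u} = \Theta(\sqrt{np})$, losing a $\sqrt{n/k}$ factor against the target, so entrywise analysis is unavoidable. Expanding,
\[
  \bigl(\psi(\wh{G}) - \psi(G)\bigr) E_u \;=\; A\cdot GEE_u \;+\; A\cdot EGE_u \;+\; A\cdot E^2 E_u \;+\; B\cdot EE_u,
\]
four vectors must be controlled. For $GEE_u$ and $GE_u$ (which appears inside $EGE_u$), the block structure $G = (p-q)H + q\one_n\one_n^\top$ reduces matters to Bernstein-type concentration of the cluster sums $\sum_{w \in V_\ell} E_{wu}$, each of order $O(\sqrt{(n/k)\,p \log n})$. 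For the genuinely quadratic and cubic forms $EE_u$ and $E^2 E_u$, I would invoke the new entrywise concentration inequality from \cite{MZ22} as a black box; this is where the $\log^6 n$ slack in the hypothesis on $n$ is spent.

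The main obstacle is precisely this entrywise step: $E_u$ is \emph{not} independent of $E$, because they share the entries $\{E_{uv}\}_{v \in V}$ on row/column $u$, so concentration inequalities that require matrix-vector independence cannot be invoked directly. I would handle this with a leave-one-out decomposition $E = E^{(\ol{u})} + E^{(u)}$, where $E^{(u)}$ contains only the rows/columns touching $u$ (hence has rank $\leq 2$) and can be controlled deterministically by its low rank, while $E^{(\ol{u})}$ is independent of $E_u$ and so is amenable to the MZ22 concentration. After combining these estimates and multiplying by the $O(1/\mu)$ factors coming from $A$ and $B$, the total $i = r-1$ contribution is $O(\sqrt{kp}\log^2 n)$. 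The additive $1/\log n$ in the statement absorbs the small residual from \cref{claim:small:spectral:difference}, in which $\psi(\wh{G})$ does not exactly annihilate the bottom eigenspace but only suppresses it by magnitude $1/\log n$ per power.
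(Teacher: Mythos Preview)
Your decomposition for the bulk terms $0 \leq i \leq r-2$ is correct and, in fact, cleaner than the paper's: writing $F^{j}E_u = F^{j}P_{G_k}E_u$ and invoking \cref{prop:independent:projection} is a nice way to handle those summands. The paper instead puts $F$ on the \emph{left} of the telescoping (so the terms look like $F^{r-1-i}D\wh{F}^{i}$), passes to the matrix norm $\norm{E(\varphi(\wh{G})-\varphi(G))}_{2\to\infty}$ via symmetry, and bounds $\norm{EF}_{2\to\infty}$ by an entrywise computation of $F$.

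The gap is in your $i=r-1$ term. You propose to bound $\norm{\wh{F}^{r-1}DE_u}$ by $\norm{\wh{F}^{r-1}}_2\cdot\norm{DE_u}$ after an entrywise/leave-one-out analysis of $DE_u$. But $\norm{DE_u}$ is simply too large: the single coordinate
\[
  B\cdot (EE_u)_u \;=\; B\sum_{w} E_{uw}^2 \;=\; B\norm{E_u}^2 \;=\; \Theta\!\left(\tfrac{np}{\mu}\right) \;=\; \Theta\!\left(\tfrac{\sqrt{np}}{\log^6 n}\right)
\]
already exceeds the target $O(\sqrt{kp}\,\log^2 n)$ by a factor $\sqrt{n/k}/\mathrm{polylog}(n)$. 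Your leave-one-out does not save this: the correction matrix $E^{(u)}$ has $E^{(u)}E_u = \norm{E_u}^2\,e_u + O(\sqrt{np})$, so its low rank is irrelevant---it is the \emph{magnitude} $\norm{E_u}^2=\Theta(np)$ that is the obstruction. The MZ22 inequality cannot be applied with $S=E_u$ to recover this loss, because $E_u$ is random and correlated with $E$ through precisely this diagonal entry. To absorb the $e_u$ spike you would need $\norm{\wh{F}^{r-1}e_u}=O(\sqrt{k/n})$, which is essentially a bound on $\norm{P_{\wh{G}_k}}_{2\to\infty}$---circular.

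The paper circumvents this entirely. First, it reduces to bounding $\norm{E(\varphi(\wh{G})-\varphi(G))}_{2\to\infty}$, so $E$ sits on the left as a matrix rather than $E_u$ on the right as a vector. Second, its telescoping keeps the \emph{deterministic} $F$ adjacent to $E$; the extreme term becomes $ED\wh{F}^{r-1}$, which is further expanded as $ED^r + \sum_t ED^tF\wh{F}^{r-1-t}$. The pure-noise piece $ED^r$ is killed by $\norm{D}_2^r\ll n^{-1}$, and every remaining term contains at least one factor of $F$ (or $G$), which after expanding $D=L+R$ reduces to $\norm{E^{\eta}S}_{2\to\infty}$ with $S\in\{H,J_n,F',F''\}$ a \emph{fixed} matrix of controlled sparsity and entry size---exactly the setting where the MZ22 inequality applies. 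The diagonal blow-up never appears because nothing of the form $E^{\eta}\cdot(\text{column of }E)$ arises.
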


Combining \cref{lem:approx:proj} and \cref{prop:independent:projection}, 
we get an upper bound of the noise term:
\begin{lemma}[Upper bound of noise term]\label{lem:error:term}
  Let $C_0$ be the constant in \cref{prop:random:matrix:norm}.
  Assume that $n\geq (100 + C_0)^2 \cdot \frac{k^2 \cdot p \cdot \log^{12} n}{(p - q)^2}$.
  Then with probability at least $1 - O(n^{-1})$,
  we have 
  $
    \norm{P_{\wh{G}_k}E_u} \leq  C_3(\sqrt{kp} \log^2 n + \sqrt{\log n})
  $ for all $u \in V$,
  where $C_3$ is a constant.
\end{lemma}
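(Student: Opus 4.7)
The plan is to assemble the three pre-established tools according to the decomposition highlighted in the proof outline. Concretely, for any fixed $u \in V$, I would use the lower bound in Lemma \ref{lem:approx:proj} with $x = E_u$ to convert the quantity of interest into a statement about $\varphi(\wh{G})$:
\[
    \norm{P_{\wh{G}_k} E_u} \;\leq\; 2\norm{\varphi(\wh{G}) E_u} \;\leq\; 2\norm{\varphi(G) E_u} \;+\; 2\norm{(\varphi(\wh{G}) - \varphi(G)) E_u},
\]
and then bound the two resulting terms separately. The hypothesis on $n$ is stronger than what Lemma \ref{lem:approx:proj} requires (since $p \cdot \log^{12} n \geq \sigma^2 \log n$ for the relevant regime), so the polynomial-to-projection conversion is available on an event of probability at least $1 - n^{-3}$.

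For the first term, I would apply the upper bound in Lemma \ref{lem:approx:proj} for $G$ to get $\norm{\varphi(G) E_u} \leq \tfrac{3}{2} \norm{P_{G_k} E_u}$. Since $G = \E[\wh{G}]$ is deterministic, the projection $P_{G_k}$ is independent of the random column $E_u$, whose entries are independent (they correspond to $n$ distinct unordered pairs $\{u, v\}$), mean-zero, bounded by $1$ in absolute value, and have variance at most $\sigma^2 \leq p$. Since $P_{G_k}$ projects onto a subspace of dimension $k$, Proposition \ref{prop:independent:projection} yields
\[
    \norm{P_{G_k} E_u} \;\leq\; \sigma \sqrt{k} + C_1 \sqrt{\log n} \;\leq\; \sqrt{k p} + C_1 \sqrt{\log n}
\]
with probability at least $1 - n^{-3}$.

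For the second term, I would simply invoke Lemma \ref{lem:poly:difference:bound}, whose hypothesis on $n$ is exactly the one assumed here; this gives $\norm{(\varphi(\wh{G}) - \varphi(G)) E_u} \leq C_2(\sqrt{kp}\log^2 n + 1/\log n)$ with probability at least $1 - O(n^{-2})$. Adding the two bounds and absorbing constants yields $\norm{P_{\wh{G}_k} E_u} \leq C_3(\sqrt{kp}\log^2 n + \sqrt{\log n})$ for a single fixed $u$ with probability at least $1 - O(n^{-2})$. A union bound over the $n$ choices of $u$, together with the single $1 - n^{-3}$ event from Lemma \ref{lem:approx:proj}, degrades the overall success probability to $1 - O(n^{-1})$, as claimed.

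There is not really a hard step here: the proof is essentially bookkeeping once the three deep inputs (Lemma \ref{lem:approx:proj}, Lemma \ref{lem:poly:difference:bound}, and Proposition \ref{prop:independent:projection}) are in hand. The only point that needs a moment of care is verifying the independence used to apply Proposition \ref{prop:independent:projection}, namely that the $n$ entries of the column $E_u$ are mutually independent (which follows because symmetry identifies $E_{u,v}$ with $E_{v,u}$ but does not couple $E_{u,v}$ with $E_{u,w}$ for $v \neq w$), and checking that the hypothesis on $n$ in the current lemma is at least as strong as the ones required by Lemma \ref{lem:approx:proj} and Lemma \ref{lem:poly:difference:bound}; both are immediate from $p \geq \sigma^2$ and the assumption $\sigma^2 \geq C \log n / n$.
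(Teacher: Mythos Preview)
Your proposal is correct and follows essentially the same approach as the paper's proof: the same decomposition via \cref{lem:approx:proj}, the same use of \cref{prop:independent:projection} for $\norm{P_{G_k}E_u}$, the same invocation of \cref{lem:poly:difference:bound}, and the same union bound over $u$. The only cosmetic difference is that your justification for why the hypothesis on $n$ implies the one in \cref{lem:approx:proj} appeals to $p \geq \sigma^2$ and $\sigma^2 \geq C\log n / n$, whereas in fact no such appeal is needed---both lemma hypotheses involve $p$ (not $\sigma^2$), and the extra $\log^{11} n$ factor in the present assumption absorbs any constant discrepancy directly.
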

The proof of \cref{lem:poly:difference:bound} is deferred to \cref{sec:entrywise}.
We use it to prove \cref{lem:error:term} here.
\begin{proof}[Proof of \cref{lem:error:term}]
  It follows from \cref{lem:approx:proj} that 
  \begin{align*}
    \norm{P_{\wh{G}_k}E_u} 
    \leq 2\norm{\varphi(\wh{G})E_u} 
    &\leq 2\norm{\left(\varphi(\wh{G}) - \varphi(G)\right)E_u} + 2\norm{\varphi(G)E_u} \\
    &\leq  2\norm{\left(\varphi(\wh{G}) - \varphi(G)\right)E_u} + 3\norm{P_{G_k} E_u}.
  \end{align*}
  By \cref{prop:independent:projection}, with high probability at least $1 - n^{-1}$, 
  $\norm{P_{G_k} E_u}$ is bounded by $\sigma \sqrt{k} +  C_1 \sqrt{\log n}$,
  where $C_1$ is a universal constant.
  Meanwhile, by \cref{lem:poly:difference:bound} and union bound over all $u$, 
  with probability at least $1 - O(n^{-1})$, 
  $\norm{\left(\varphi(\wh{G}) - \varphi(G)\right)E_u} \leq 7 \cdot 10^6 (\sqrt{kp} \log^2 n + 1/\log n)$ for every $u \in V$.
  Therefore, with probability $1- O(n^{-1})$, it holds that 
  $
    \norm{P_{\wh{G}_k}E_u} 
    \leq  1.4 \times 10^7 \sqrt{kp} \log^2 n + 3\sigma \sqrt{k} + 3C_1\sqrt{\log n} 
  $
  for all $u \in V$.
  Setting $C_3 \eqdef (1.4 \times 10^7 + 3 + 3C_1)$, we have the desired result.
\end{proof}

\subsection{Putting It Together}

Now we are well-equipped to prove \cref{thm:main}.

\begin{proof}[Proof of \cref{thm:main}]
  Let $C \eqdef (100 + 100C_0 + 100C_3)^2$,
  where $C_0, C_3$ are the constants in \cref{prop:random:matrix:norm} and \cref{lem:error:term}.
  By our assumption on $n$,
  we have $
    (p - q)\sqrt{n / k} > 100C_3 (\sqrt{kp} \log^6 n + \sqrt{\log n}).$
  It is easy to verify $n$ satisfies the conditions in \cref{lem:error:term} and \cref{lem:data:term}. 

  Write $\Delta \eqdef 0.8(p - q)\sqrt{n / k}$. 
  We aim to show that for every $u, v\in V$:
if $u, v \in V_\ell$ for some $\ell$, then $\norm{P_{\wh{G}_k}\wh{G}_u - P_{\wh{G}_k}\wh{G}_v}\leq \Delta / 4$; 
otherwise, $\norm{P_{\wh{G}_k}\wh{G}_u - P_{\wh{G}_k}\wh{G}_v} \geq \Delta$.
Then by calling $\mathtt{ClusterByDistance}$,
\cref{algo:svd} recovers all large clusters correctly.

Let $\eps(u) \eqdef \norm{P_{\wh{G}_k}\wh{G}_u - G_u}$.
According to the argument in \cref{sec:proof:outline}, 
it suffices to show that $\eps(u) \leq 0.1(p - q)\sqrt{n / k}$ for all $u \in V$.
We further decompose $\eps(u)$ into noise term and deviation term,
\ie $\eps(u) \leq  \msf{noise}(u) + \msf{dev}(u)$, 
where $\msf{noise}(u) \eqdef \norm{P_{\wh{G}} E_u}$ and $\msf{dev}(u) \eqdef \norm{(P_{\wh{G}} - I)G_u}$.
By \cref{lem:error:term} and \cref{lem:data:term}, with probability at least $1 - O(n^{-1})$,
the following hold for all $u \in V$:
(1) $\msf{noise}(u) \leq C_3(\sqrt{kp} \log^2 n + \sqrt{\log n}) \leq 0.01(p - q)\sqrt{n / k}$;
(2) $\msf{dev}(u) \leq 0.04(p - q)\sqrt{n / k}$.
Therefore, with probability at least $1 - O(n^{-1})$,
we indeed have $\eps(u) \leq  0.1(p - q)\sqrt{n / k}, \forall u \in V$.
This completes the proof.
\end{proof}

\section{Proof of \cref{lem:poly:difference:bound}: 
Entrywise Analysis}
\label{sec:entrywise}

This section is dedicated to proving \cref{lem:poly:difference:bound}.

Since both $(\varphi(\wh{G}) - \varphi(G))$ and $E$ are symmetric, we have 
$
\norm{(\varphi(\wh{G}) - \varphi(G))E_u} 
\leq \norm{E(\varphi(\wh{G}) - \varphi(G))}_{2 \to \infty}.
$
The high-level idea is to write $E(\varphi(\wh{G}) - \varphi(G))$ 
as a sum of matrices, where each matrix is of the form $E^t S Q$ 
such that $\norm{Q}_2 = O(1)$.
This way, we have $\norm{E^t S Q}_{2 \to \infty} \leq \norm{E^t S}_{2 \to \infty} \cdot O(1)$,
and $\norm{E^t S}_{2 \to \infty}$ is bounded by a lemma from \cite{MZ22}.

Let $D \eqdef \psi(\wh{G}) - \psi(G) = A(EG + GE + E^2) + BE$
and write $F \eqdef \psi(G), \wh{F} \eqdef \psi(\wh{G})$. 
Then 
\begin{align*}
    \varphi(\wh{G}) - \varphi(G) 
    = \psi(\wh{G})^r - \psi(G)^r 
    &= (F + D)^r - F^r  \\
    &=  \underbrace{F^{r - 1}D + F^{r-2}D\wh{F} + \cdots + FD\wh{F}^{r - 2}}_{\eqdef M} + D\wh{F}^{r - 1},
\end{align*}
where the last step is a decomposition based on the first location of $D$ in the product terms.
And 
\begin{align*}
    D\wh{F}^{r - 1} 
    = D(D + F)^{r - 1} 
    = D^r + \underbrace{DF\wh{F}^{r - 2} +  D^2F\wh{F}^{r - 3} + \cdots + D^{r - 1}F}_{\eqdef M'}.
\end{align*}
That is, $E(\varphi(\wh{G}) - \varphi(G)) = EM + ED^r + EM'$.
We bound the three terms respectively.

Here we first list some definitions and estimations of the quantities involved.
\begin{itemize}
\item According to \cref{prop:random:matrix:norm}, 
with probability at least $1 - n^{-3}$, we have $
    \norm{E}_2 \leq C_0 \sigma \sqrt{n},
$
where $C_0$ is a constant.
In the following argument, we always assume this holds.
  \item $\mu \eqdef (p - q) n / k$. 
  By our assumption on $n$, we have $\mu \geq (100 + C_0) \sqrt{n p} \log^6 n$.
  \item $A = - \frac{1}{\lambda_1(G) \mu}, B = (\frac{1}{\lambda_1(G)} + \frac{1}{\mu}), r = \log n$; $\lambda_1(G) > \mu$, and thus $B \leq \frac{2}{\mu}, |A| \leq \frac{1}{\mu^2}$.
  \item By \cref{claim:small:spectral:difference}, 
  $\norm{F}_2 \leq 1 + \frac{1}{4\log n}, \norm{\wh{F}}_2 \leq 1 + \frac{1}{4\log n}$.
  By \cref{prop:close:to:one}, $\norm{F}_2^t, \norm{\wh{F}}_2^t \leq 2, \forall t \leq \log n$.
\end{itemize}
\paragraph{Upper bound of $\norm{EM}_{2 \to \infty}$} 

Note that   
$\norm{EF^tD}_{2 \to \infty} \leq \norm{EF}_{2 \to \infty} \norm{F}^{t - 1}_2 \norm{D}_2$,
and $\norm{F}^{t - 1}_2 \leq 2$ for all $t \leq r$.
Moreover, 
$\norm{D}_2 \leq |A|(2\norm{E}_2\norm{G}_2 + \norm{E}^2_2) + B \norm{E}_2 \leq 3\frac{\norm{E}_2}{\mu} + \frac{\norm{E}_2^2}{\mu^2} 
+  \leq 4\frac{\norm{E}_2}{\mu}  \leq 4 (\log^{6} n)^{-1} < \frac{1}{\log^3 n}$.
And the following lemma gives an upper bound of $\norm{EF}_{2 \to \infty}$.
\begin{lemma} \label{lem:error:term:1}
  $\pr[E]{\norm{EF}_{2 \to \infty} \leq 10(\sqrt{kp\log n} + \sqrt{\log n})} \geq 1-  2n^{-2}$.
\end{lemma}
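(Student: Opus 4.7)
The plan is to exploit the fact that the polynomial $\psi$ was chosen so that $F = \psi(G)$ acts as a near-identity on the top-$k$ eigenspace of $G$ and as zero on its orthogonal complement---so $F$ is essentially the projection $P_{G_k}$---and then to reduce the row-by-row bound on $\norm{EF}_{2 \to \infty}$ to a direct application of \cref{prop:independent:projection}, which controls the length of the projection of a random vector with independent coordinates onto a deterministic subspace.

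First, write the spectral decomposition $G = \sum_{j = 1}^n \lambda_j(G)\, u_j u_j^\top$. Since $\psi(0) = 0$ and $\rank(G) = k$, this gives $F = \sum_{j = 1}^k \psi(\lambda_j(G))\, u_j u_j^\top$. Under the balanced-partition assumption, \cref{thm:eigenvalues} places $\lambda_1(G)$ exactly at one root of $\psi - 1$ and places $\lambda_2(G), \dots, \lambda_k(G)$ tightly around the other root $\mu$. By essentially the same reasoning behind \cref{claim:small:spectral:difference} (and the estimate $\norm{F}_2 \leq 1 + \tfrac{1}{4 \log n}$ listed just before this lemma), $|\psi(\lambda_j(G))| \leq 5/4$ for every $j \in [k]$, while $\psi(\lambda_j(G)) = 0$ for $j > k$.

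Next, fix a row index $i \in [n]$ and write $E[i]^\top \in \R^n$ for the column vector formed by the $i$-th row of $E$. Using the symmetry of $F$ and the spectral decomposition above,
\[
\norm{E[i] F}^2 \;=\; \norm{F E[i]^\top}^2 \;=\; \sum_{j = 1}^k \psi(\lambda_j(G))^2 \, \inner{u_j}{E[i]^\top}^2 \;\leq\; \tfrac{25}{16} \, \norm{P_{G_k} E[i]^\top}^2.
\]
The coordinates of $E[i]$ are independent (the symmetry constraint $E_{uv} = E_{vu}$ imposes no relation among entries within a single row), have mean $0$, variance at most $\sigma^2 \leq p \leq 1$, and are bounded by $1$ in absolute value. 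The subspace $\Span(u_1, \dots, u_k)$ is determined by the partition alone, hence independent of $E$, so \cref{prop:independent:projection} with $d = k$ gives $\norm{P_{G_k} E[i]^\top} \leq \sigma \sqrt{k} + C_1 \sqrt{\log n}$ with probability at least $1 - n^{-3}$. Combined with the display above and $\sigma \leq \sqrt{p}$, this bounds $\norm{E[i] F} \leq \tfrac{5}{4}\bigl(\sqrt{p k} + C_1 \sqrt{\log n}\bigr)$, which is at most $10(\sqrt{k p \log n} + \sqrt{\log n})$.

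A union bound over $i \in [n]$ yields $\norm{EF}_{2 \to \infty} = \max_i \norm{E[i] F} \leq 10(\sqrt{k p \log n} + \sqrt{\log n})$ with probability at least $1 - n^{-2}$, which is within the claimed $1 - 2n^{-2}$ slack. I do not foresee a serious technical obstacle here: the heavy lifting is already done by the design of $\psi$ (which makes $F$ spectrally close to $P_{G_k}$) together with \cref{prop:independent:projection}; the rest is routine. The only subtle point worth double-checking is that the subspace $\Span(u_1,\dots,u_k)$ is genuinely deterministic---it depends on the partition but not on the random noise $E$---so the independence hypothesis of \cref{prop:independent:projection} is satisfied.
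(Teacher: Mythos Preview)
Your argument is correct and takes a genuinely different route from the paper. You exploit the spectral structure of $F = \psi(G)$ directly: since $\psi$ was designed so that $\psi(\lambda_j(G))$ is close to $1$ for $j \le k$ and vanishes for $j > k$, you get $\norm{F E[i]^\top} \le \tfrac{5}{4}\norm{P_{G_k} E[i]^\top}$ and then invoke \cref{prop:independent:projection} row by row. This is clean, and in fact yields the slightly sharper bound $O(\sigma\sqrt{k} + \sqrt{\log n})$ rather than $O(\sqrt{kp\log n} + \sqrt{\log n})$. The paper instead takes an \emph{entrywise} route: it proves a pointwise bound on the entries of $F$ (\cref{claim:entries:of:F}), splits $F = F' + F''$ into intra- and inter-cluster parts, and controls each entry of $EF'$ and $EF''$ by Chernoff/Hoeffding. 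The paper's approach is more hands-on but has the advantage that \cref{claim:entries:of:F} is reused later in the proof of \cref{lem:error:term:2}, where the same entrywise control on $F$ feeds into \cref{prop:entrywise:concentration}; your spectral shortcut bypasses that claim here but does not supply it for the later argument. One small caveat: your final inequality ``which is at most $10(\sqrt{kp\log n} + \sqrt{\log n})$'' silently assumes $C_1 \le 8$; since $C_1$ is an unspecified absolute constant from \cref{prop:independent:projection}, strictly speaking you obtain the bound with $10$ replaced by $\max(\tfrac54, \tfrac54 C_1)$, which is harmless for the downstream use of the lemma but worth flagging.
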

Therefore, by union bound,
we have the following holds with probability at least $1 - n^{-1}$:
\begin{align}
    \norm{EM}_{2 \to \infty} 
    \leq r \cdot 10(\sqrt{kp\log n} + \sqrt{\log n}) \cdot  2  \cdot \frac{1}{\log^3 n} 
    \leq \frac{40(\sqrt{kp} + 1)}{\log n}. \label{equ:poly:difference:term:1}
\end{align}

\paragraph{Upper bound of $\norm{ED^r}_{2 \to \infty}$} 

Since $\norm{D}_2 < \frac{1}{\log^3 n}$, 
we have 
\begin{equation} \label{equ:poly:difference:term:2}
  \begin{aligned}
    \norm{ED^r}_{2 \to \infty} 
    \leq \norm{E}_{2 \to \infty} \norm{D}_2^r 
    \leq \sqrt{n} \cdot (\log^3 n)^{-\log n} < \frac{1}{n}.
  \end{aligned}
\end{equation}
\begin{lemma}[Upper bound of $\norm{EM'}_{2 \to \infty}$] \label{lem:em:bound}
With probability $1 - O(n^{-2})$ (over the choice of $E$), we have
    $\norm{EM'}_{2 \to \infty} \leq 6C_2\sqrt{kp}\log^2 n$,
where $C_2 = 10^6$ is a constant.

\end{lemma}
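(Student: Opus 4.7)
The plan is to reduce $\norm{EM'}_{2\to\infty}$ to a weighted sum of $2\to\infty$ norms of monomials in $E$ and $G$, and then invoke the entrywise concentration tool from \cite{MZ22} on the leading $E$-block of each monomial.

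First, distribute: $EM' = \sum_{t=1}^{r-1} ED^t F \wh{F}^{r-1-t}$. By \cref{claim:small:spectral:difference} and \cref{prop:close:to:one}, $\norm{F}_2^{a}, \norm{\wh{F}}_2^{a} \leq 2$ for every $a \leq r = \log n$, so $\norm{F \wh{F}^{r-1-t}}_2 \leq 4$. Using $\norm{AB}_{2\to\infty} \leq \norm{A}_{2\to\infty} \norm{B}_2$, it suffices to control $\sum_{t=1}^{r-1} \norm{ED^t}_{2\to\infty}$.

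Next, expand $D = A(EG + GE + E^2) + BE$, so each of the $4^t$ monomials in $ED^t$ takes the form $c_\alpha \cdot E^{s_0} G E^{s_1} G \cdots G E^{s_m}$ where $m \leq t$ counts the $G$'s, $s_0 + s_1 + \cdots + s_m \geq t+1$ counts the $E$'s, $s_0 \geq 1$, and the scalar satisfies $|c_\alpha| \leq |A|^{t-b}|B|^{b} \leq (1/\mu^{2})^{t-b}(2/\mu)^{b}$ for some $b \in \set{0,\dots,t}$. Each such monomial is bounded by isolating the leading $E^{s_0}$ factor in the $2\to\infty$ norm --- this is where we invoke the sharp MZ22 entrywise lemma --- and by using the operator-norm bounds $\norm{G}_2 \leq 2\mu$ for each internal $G$ and $\norm{E}_2^{s_i} \leq (C_0 \sigma \sqrt{n})^{s_i}$ for each interior block $E^{s_i}$ with $i \geq 1$.

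The prefactors $A$ and $B$ are balanced against these operator norms: because $\mu \geq (100+C_0)\sqrt{np}\log^6 n$ by hypothesis, each pairing of an $A$ or $B$ coefficient with an internal $E$ or $G$ factor contributes at most $O(1/\log^6 n)$. The combinatorial factor $4^t$ from summing over monomials is dwarfed by this per-factor $(1/\log^6 n)^t$ gain, so the geometric sum in $t \in [1,r-1]$ converges and is dominated by the $t=1$ term. Multiplying by the MZ22 bound on $\norm{E^{s_0}}_{2\to\infty}$ produces the target $6C_2\sqrt{kp}\log^2 n$, with the $\sqrt{kp}$ factor arising after an outer $\mu^{-1}$ prefactor cancels a $\norm{G}_2$. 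A union bound over the polynomially many monomial types and over $u \in V$ gives the claimed $O(n^{-2})$ failure probability. The main obstacle is the sharp entrywise control of $\norm{E^{s_0}}_{2\to\infty}$: the naive bound $\norm{E^{s_0}}_2 \leq \norm{E}_2^{s_0}$ loses a factor of $\sqrt{n/k}$ and falls short, so we crucially invoke the MZ22 moment-method concentration that exploits the independence of the entries of $E$.
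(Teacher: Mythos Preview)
The proposal has a genuine gap. The crux is your first reduction: by stripping off the trailing $F$ via $\norm{ED^t F\wh{F}^{r-1-t}}_{2\to\infty} \leq 4\,\norm{ED^t}_{2\to\infty}$, you discard exactly the structure that produces the $\sqrt{kp}$ scaling. To see this, consider the simplest monomial in $ED$ (case $t=1$): the term $B E^2$ coming from $D = \cdots + BE$, which contains no $G$ at all. Your bound for it is $|B|\cdot\norm{E^{2}}_{2\to\infty}$. But \cref{prop:entrywise:concentration} applied with $S = I$ has $\beta\sqrt{\gamma} = 1$ and yields only $\norm{E^{s_0}}_{2\to\infty} \leq 500\,(\log n)^{5s_0}\sqrt{np}\,(100\sqrt{np})^{s_0-1}$, which is the same order as the naive estimate $\norm{E}_{2\to\infty}\norm{E}_2^{s_0-1}$. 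Hence $|B|\norm{E^{2}}_{2\to\infty}$ is of order $\frac{1}{\mu}\cdot np = \frac{kp}{p-q}$ up to polylogs, and under the lemma's hypothesis one has $\frac{kp}{p-q} = \sqrt{kp}\cdot\frac{\sqrt{kp}}{p-q} \leq \sqrt{kp}\cdot\frac{\sqrt{n/k}}{(100+C_0)\log^6 n}$, which is \emph{not} $O(\sqrt{kp}\log^2 n)$ unless $n/k$ is polylogarithmic. Your claim that ``the $\sqrt{kp}$ factor aris[es] after an outer $\mu^{-1}$ prefactor cancels a $\norm{G}_2$'' cannot apply to this term, since it contains no $G$. (A secondary issue: $\norm{G}_2 = \lambda_1(G) \geq nq + \mu$ is not bounded by $2\mu$ when $nq \gg \mu$; each $G$ must be paired with the $1/\lambda_1$ hidden in $A$, not with $1/\mu$.)

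The paper's proof avoids this by \emph{keeping} a structured deterministic matrix adjacent to the leading $E$-block when invoking \cref{prop:entrywise:concentration}. The point is that $F = \psi(G)$ has entries of size $O(k/n)$ on blocks of column-sparsity $O(n/k)$, and $O(1/n)$ elsewhere (\cref{claim:entries:of:F}), so $\beta\sqrt{\gamma} = O(\sqrt{k/n})$; this is the origin of $\sqrt{k}$ in place of $\sqrt{n}$. Concretely, the paper writes $D = L + R$ with $L \eqdef A(EG+GE)$ and $R \eqdef AE^2 + BE$ a polynomial in $E$, so that $ED^tF$ decomposes into terms of the form $E^\eta F$ or $E^\eta L \cdot Q$ with $\norm{Q}_2 \leq 2$; it then bounds $\norm{E^\eta F}_{2\to\infty}$ and $\norm{E^\eta L}_{2\to\infty}$ via \cref{prop:entrywise:concentration} applied to $S \in \{F', F'', H, J_n\}$ (using $G = H + qJ_n$) --- see \cref{lem:error:term:2}. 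Your full $4^t$-monomial expansion scatters the $G$'s and leaves no structured $S$ next to the leading $E$-block, so the MZ22 lemma cannot deliver the required $\sqrt{kp}$.
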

Finally, combining \cref{equ:poly:difference:term:1}, \cref{equ:poly:difference:term:2}, and the above lemma, 
we conclude that with probability at least $1 - O(n^{-2})$, 
$$
    \norm{E(\varphi(\wh{G}) - \varphi(G))}_{2 \to \infty} 
    \leq \frac{40(\sqrt{kp} + 1)}{\log n} + \frac{1}{n} + 6 C_2 \sqrt{kp} \log^2 n 
    \leq 7 C_2 (\sqrt{kp} \log^2 n + \frac{1}{\log n}).
$$
This establishes \cref{lem:poly:difference:bound}.

Proofs of \cref{lem:error:term:1} and \cref{lem:em:bound} are deferred to \cref{appendix:noise:term}.

\section*{Acknowledgments and Disclosure of Funding}
We are grateful to anonymous NeurIPS reviewers for their helpful comments.
This research is supported by NSF CAREER award 2141536.

\bibliographystyle{alpha}
\bibliography{ref.bib}

\newpage
\appendix

\section{Useful inequalities}
\begin{proposition}[Chernoff bound]
  \label{thm: chernoff}
  Let $X_1, \ldots , X_m$ be i.i.d random variables that can take values in $\{0,1\}$, with $\E[X_i] \leq p$ for $1 \le i \le m$. 
  Then it holds that 
  $$
    \pr{\abs{\sum_{i=1}^n X_i - mp} \geq t} \leq \exp\left(-\frac{3t^2}{mp}\right).
  $$
\end{proposition}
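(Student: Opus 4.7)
The plan is to apply the standard Cram\'er--Chernoff exponential moment method to the sum $S \eqdef \sum_{i=1}^m X_i$. Writing $\mu \eqdef \E[S] \leq mp$, I would handle the upper tail $\Pr[S \geq mp + t]$ first. The key step is Markov's inequality applied to $e^{\lambda S}$ for a parameter $\lambda > 0$ to be chosen later:
$$\Pr[S \geq mp + t] \leq e^{-\lambda(mp+t)}\, \E\!\left[e^{\lambda S}\right].$$

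Next, I would use independence to factor the moment generating function as $\E[e^{\lambda S}] = \prod_{i=1}^m \E[e^{\lambda X_i}]$. For each Bernoulli variable with parameter $p_i \leq p$, the identity $\E[e^{\lambda X_i}] = 1 + p_i(e^\lambda - 1)$ together with $1 + x \leq e^x$ yields $\E[e^{\lambda X_i}] \leq \exp(p(e^\lambda - 1))$, and hence $\E[e^{\lambda S}] \leq \exp(mp(e^\lambda - 1))$. Plugging this in gives
$$\Pr[S \geq mp + t] \leq \exp\!\bigl(mp(e^\lambda - 1) - \lambda(mp + t)\bigr).$$
Optimizing in $\lambda$ (the standard choice is $\lambda = \log(1 + t/(mp))$) produces a multiplicative Chernoff bound of the form $\exp(-mp\cdot h(t/(mp)))$ for a convex function $h$ with $h(0)=h'(0)=0$. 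A Taylor expansion of $h$ around zero then gives, in the regime $t \lesssim mp$, an exponent quadratic in $t$ proportional to $t^2/(mp)$.

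For the lower tail $\Pr[S \leq mp - t]$, I would repeat the argument with $\lambda < 0$ (equivalently, applying Markov to $e^{-\lambda S}$ for positive $\lambda$), which yields a matching bound. A union bound over the two tails then produces the two-sided statement.

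The only real obstacle is pinning down the precise constant in the exponent. A direct textbook optimization of $\lambda$ typically gives something like $2\exp(-t^2/(3mp))$, whereas the statement as written has the factor of $3$ in the numerator rather than the denominator; I suspect this is a minor normalization or typographical slip and is inessential, since the single use of this proposition in the paper---establishing balance of the clusters with failure probability $O(n^{-1})$ under the assumption $n \geq C k \log^3 n$ or stronger---is extremely insensitive to the constant, and any standard Chernoff-type tail suffices.
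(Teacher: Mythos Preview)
The paper does not prove this proposition at all: it is listed in the appendix under ``Useful inequalities'' as a standard fact, without proof. Your outline via the Cram\'er--Chernoff exponential-moment method is the textbook argument and is correct in spirit.

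Your suspicion about the constant is well founded. The standard optimization gives an exponent of order $-t^2/(3mp)$ (or $-t^2/(2mp)$ for the lower tail), not $-3t^2/(mp)$ as printed; the statement also has $\sum_{i=1}^n$ where $\sum_{i=1}^m$ is intended. As you note, the only place this bound is invoked is to show the random partition is balanced with probability $1 - n^{-1}$, and any Chernoff-type bound with the correct order of exponent suffices there. So the discrepancy is cosmetic for the paper's purposes, but strictly speaking the inequality as stated (with $3$ in the numerator) is false in general and cannot be derived by the method you describe.
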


\begin{proposition}[Hoeffding bound]
    Let $X_1, \dots, X_m$ be independent random variables such that $a_i \leq X_1 \leq b_i$,
    and write $S \eqdef \sum_{i = 1}^m X_i$.
    Then it holds that 
    $$
        \pr{\abs{S - \ex{S}} > t} \leq 2\exp\left(- \frac{2t^2}{\sum_{i = 1}^m (b_i - a_i)^2}\right).
    $$
\end{proposition}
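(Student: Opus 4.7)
The plan is to prove Hoeffding's inequality by the standard moment-generating-function (Chernoff) method. First I would reduce to the one-sided tail: by applying the argument to the variables $-X_i$ and taking a union bound over $\set{S - \ex{S} > t}$ and $\set{\ex{S} - S > t}$, one gets the factor of $2$ in the final bound, so it suffices to upper-bound $\pr{S - \ex{S} > t}$. Introduce the centered variables $Y_i \eqdef X_i - \ex{X_i}$, which are independent, have mean zero, and each lies in an interval of length $b_i - a_i$.

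For any parameter $\lambda > 0$, I would apply Markov's inequality to $\exp(\lambda \sum_i Y_i)$ and factorize using independence:
\[
    \pr{S - \ex{S} > t}
    \leq e^{-\lambda t}\, \ex{\exp\left(\lambda \sum_{i=1}^m Y_i\right)}
    = e^{-\lambda t} \prod_{i=1}^{m} \ex{e^{\lambda Y_i}}.
\]
The key remaining step, and the main analytic obstacle, is to control each factor $\ex{e^{\lambda Y_i}}$. This is exactly \emph{Hoeffding's lemma}: if $Y$ is a zero-mean random variable supported in $[\alpha, \beta]$, then $\ex{e^{\lambda Y}} \leq \exp(\lambda^2 (\beta - \alpha)^2 / 8)$. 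The standard proof of this lemma uses convexity of $y \mapsto e^{\lambda y}$ to write $e^{\lambda y} \leq \frac{\beta - y}{\beta - \alpha}\, e^{\lambda \alpha} + \frac{y - \alpha}{\beta - \alpha}\, e^{\lambda \beta}$ pointwise on $[\alpha, \beta]$, take expectations using $\ex{Y} = 0$, and then bound the resulting expression by $\exp(\lambda^2 (\beta - \alpha)^2 / 8)$ via a Taylor expansion of its logarithm. This convexity-plus-Taylor argument is really where the Gaussian-type constant $1/8$ comes from, and it is the only non-routine ingredient.

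Combining Hoeffding's lemma with the factorization above yields
\[
    \pr{S - \ex{S} > t}
    \leq \exp\left(-\lambda t + \frac{\lambda^2}{8}\sum_{i=1}^{m}(b_i - a_i)^2\right).
\]
Finally, I would optimize over $\lambda > 0$. The right-hand side is minimized at $\lambda^\star = 4t\, / \sum_i (b_i - a_i)^2$, giving an upper bound of $\exp\bigl(-2t^2 / \sum_i (b_i - a_i)^2\bigr)$. Together with the symmetric bound for the lower tail, a union bound produces the factor $2$ and yields the stated inequality. The entire proof is routine once Hoeffding's lemma is in hand; the only real work sits in the convexity estimate for the individual moment generating functions.
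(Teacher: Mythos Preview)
Your argument is the standard Chernoff--Hoeffding derivation and is correct. The paper itself does not prove this proposition: it is stated without proof in the appendix as a standard concentration inequality, so there is nothing to compare against, and your proposal simply supplies the classical proof that the paper omits.
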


\begin{definition}
Let $X$ be a Bernoulli random variable with parameter $p$, \ie $\pr{X = 1} = p, \pr{X = 0} = 1 - p$.
The random variable $Y \eqdef  X - p = X - \ex{X}$ 
is called \emph{centered Bernoulli random variable with parameter $p$}.
\end{definition}

\begin{proposition}[Adapted from \cite{MZ22}] \label{prop:entrywise:concentration}
  Let $S\in \R^{n \times n}$,
  and let $E = (\xi_{ij})$ be an $n \times n$ symmetric random matrix, where 
  $$
     \set{\xi_{ij} : 1 \leq i \leq j \leq n} 
  $$
  are independent,
  centered Bernoulli random variables
  with parameter at most $\alpha$ for all $i,j$.
  Suppose that every entry of $S$ takes value in $[-\beta, \beta]$, 
  and each column of $S$ has at most $\gamma$ non-zero entries.
  Then for every $t \in [\log n]$, it holds that
  $$
    \pr{\abs{(E^tS)_{ij}}  > (\log n)^{5t}  C_t} = O(n^{-4}), \forall i, j\in [n],
  $$ 
  where 
  $$
      C_t \eqdef  500 \beta \sqrt{\alpha}\sqrt{\gamma} \cdot \left(100\sqrt{n\alpha}\right)^{t - 1}.
  $$
  By union bound, 
  $$
    \pr{\norm{E^t S}_{2 \to \infty} > \sqrt{n}(\log n)^{5t}  C_t} = O(n^{-2}).
  $$
\end{proposition}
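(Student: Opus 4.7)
The plan is a moment-method attack on the single-entry bound; the $\norm{\cdot}_{2\to\infty}$ corollary then follows from $\norm{v} \leq \sqrt{n}\,\norm{v}_\infty$ and a union bound over the $n$ rows of $E^t S$. Expanding
$$
(E^t S)_{ij} = \sum_{k_1,\ldots,k_t \in [n]} \xi_{i k_1}\xi_{k_1 k_2}\cdots\xi_{k_{t-1} k_t}\, S_{k_t j},
$$
we see the entry is a degree-$t$ polynomial in the independent, centered, $[-1,1]$-valued variables $\{\xi_{ab} : a \leq b\}$ with $\E[\xi_{ab}^2]\leq\alpha$. I would bound $\E\bigl[((E^t S)_{ij})^{2m}\bigr]$ for $m = \Theta(\log n)$ and then apply Markov's inequality at the threshold $(\log n)^{5t} C_t$.

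Expanding the $2m$-th moment yields a sum over $2m$-tuples of length-$t$ walks in $[n]$, each beginning at $i$ and weighted by a $\beta$-bounded entry $S_{k_t j}$ at its endpoint. Because the $\xi_{ab}$ are independent across distinct unordered pairs and centered, only configurations in which every distinct edge is traversed at least twice across the entire $2m$-tuple survive. I would group the surviving configurations by the underlying edge-multigraph: if it uses $v$ distinct intermediate vertices and $e$ distinct edges (so $e \leq mt$ from the pair-up constraint), the expectation of the edge-variable product is at most $\alpha^e$ (using $|\xi_{ab}| \leq 1$ and $\E[\xi_{ab}^2]\leq\alpha$); the $S$-entries contribute at most $\beta^{2m}$; and crucially, the at-most-$\gamma$ column-sparsity of $S$ restricts each of the $2m$ walk-endpoints to one of $\gamma$ vertices, trading a factor of $n^{2m}$ for $\gamma^{2m}$ in the vertex count, which is the combinatorial origin of the $\sqrt{\gamma}$ appearing in $C_t$ after the $2m$-th root is taken.

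The combinatorial heart of the argument --- and the main obstacle --- is enumerating walk configurations with a fixed pair $(v,e)$. A standard Wigner-style trace-method encoding, in which each step of each walk is labeled as either a ``fresh'' edge to a new vertex or a ``return'' edge carrying an $O(\log(mt))$-bit label identifying the previously-used edge, bounds the configuration count by $n^{v-1}\gamma^{2m} \cdot (C\, m t)^{O(mt)}$ for an absolute constant $C$. With $m = \Theta(\log n)$ this combinatorial factor becomes $(\log n)^{O(mt)}$, which is the source of the $(\log n)^{5t}$ factor inside the stated threshold. Combining the vertex count with $\alpha^e \beta^{2m}$, summing over admissible $(v,e)$, taking the $2m$-th root, and dividing by $(\log n)^{5t} C_t$ delivers the claimed $n^{-4}$ failure probability once the constants are tracked carefully. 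In practice, since the present setup matches the hypotheses in \cite{MZ22} essentially verbatim, the cleanest route is to import the lemma as a black box, checking only that the $\gamma$-column-sparsity condition on $S$ correctly produces the $\sqrt{\gamma}$ factor in $C_t$ and that the bounded, centered-Bernoulli distribution on the $\xi_{ab}$ fits their framework.
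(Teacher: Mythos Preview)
The paper does not prove this proposition at all: it is stated in the appendix under ``Useful inequalities'' with the attribution ``Adapted from \cite{MZ22}'' and is used purely as a black box. Your closing suggestion---to import the lemma directly from \cite{MZ22} after checking that the column-sparsity and centered-Bernoulli hypotheses match---is therefore exactly what the paper does, and is the correct disposition of this statement.

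Your moment-method sketch is a reasonable outline of how such a bound is actually established in \cite{MZ22}, but one accounting detail is off. You write that the $\gamma$-sparsity ``trad[es] a factor of $n^{2m}$ for $\gamma^{2m}$ in the vertex count, which is the combinatorial origin of the $\sqrt{\gamma}$ appearing in $C_t$ after the $2m$-th root is taken.'' But $(\gamma^{2m})^{1/(2m)} = \gamma$, not $\sqrt{\gamma}$. The correct mechanism is that the pair-up constraint on edges forces the $2m$ walk endpoints to take at most $m$ \emph{distinct} values, so the endpoint count is $\gamma^{m}$ (up to a partition factor absorbed into the $(\log n)^{O(mt)}$ term), and $(\gamma^{m})^{1/(2m)} = \sqrt{\gamma}$. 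This is a bookkeeping slip rather than a structural gap; the rest of the outline---survival only of doubly-covered edge configurations, the $\alpha^{e}\beta^{2m}$ weight, and the Wigner-style encoding yielding the polylogarithmic overhead---is sound.
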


\begin{remark}
  The parameter $\alpha$ is determined by $E$, which equals to $p$ in our case.
  The above bound is particularly useful when $\beta, \gamma$ are small, 
  that is, we want the matrix $S$ to have either small entries or sparse columns.
\end{remark}

\begin{proposition}[\cref{prop:close:to:one} restated]
  For $a \in [0 , 2]$ and $r \in \N$,
  if $\abs{a - 1} \leq \delta < \frac{1}{2r}$, then $\abs{a^r - 1} \leq 2r\delta$.
\end{proposition}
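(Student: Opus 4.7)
The plan is to apply the factorization
\[
a^r - 1 \;=\; (a - 1)\sum_{j=0}^{r-1} a^{j},
\]
which converts the problem from bounding an $r$-th power gap into bounding a geometric sum. Taking absolute values and using both $|a - 1| \leq \delta$ and $|a| \leq 1 + \delta$ (the latter following from $a \geq 0$ and $|a - 1| \leq \delta$), I obtain
\[
|a^r - 1| \;\leq\; |a - 1|\sum_{j=0}^{r-1}|a|^{j} \;\leq\; \delta \cdot r \cdot (1+\delta)^{r-1}.
\]

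It therefore suffices to prove $(1+\delta)^{r-1} \leq 2$. Using the standard inequality $1 + x \leq e^{x}$, I get $(1+\delta)^{r-1} \leq e^{(r-1)\delta} \leq e^{r\delta} < e^{1/2} < 2$, where the second-to-last strict inequality uses the hypothesis $r\delta < 1/2$. Chaining these bounds yields $|a^{r} - 1| \leq 2r\delta$, as required. Note that the hypothesis $a \in [0,2]$ is used only mildly, to guarantee that $|a| = a \leq 1 + \delta$ instead of the weaker $|a| \leq 1 + \delta$ that would follow from $a$ arbitrary; either form is enough for the proof.

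The main (and essentially only) care needed is choosing a clean route. An alternative would be induction on $r$, writing $a^{r} - 1 = a(a^{r-1} - 1) + (a - 1)$ and controlling the cross-term via $|a| \leq 1 + \delta$, but that leaves a quadratic-in-$\delta$ residue $2(r-1)\delta^{2}$ which must be absorbed using $2(r-1)\delta \leq 1$; the geometric-series factorization above sidesteps this bookkeeping by reducing everything to the single, immediately verified inequality $(1 + \delta)^{r-1} \leq 2$. Since the statement is a routine algebraic lemma used only as a black box in the proof of \cref{lem:approx:proj}, I do not anticipate any substantive technical obstacle beyond picking the cleanest presentation.
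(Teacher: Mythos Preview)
Your proof is correct, and it takes a genuinely different route from the paper's. The paper splits into cases $a \le 1$ and $a > 1$: in the first it uses Bernoulli's inequality $(1+x)^r \ge 1 + rx$ to get $|a^r - 1| \le r\delta$; in the second it expands $(1+x)^r$ binomially, bounds $\binom{r}{i} \le r^i$, and sums the resulting geometric series $\sum_{i \ge 0}(rx)^i = 1/(1 - rx) \le 1 + 2r\delta$ using $rx < 1/2$. Your geometric factorization $a^r - 1 = (a-1)\sum_{j=0}^{r-1} a^j$ is more uniform---no case split---and the only estimate you need is $(1+\delta)^{r-1} \le e^{r\delta} < e^{1/2} < 2$, which is immediate. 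Both arguments are elementary; yours is shorter and avoids the slightly delicate binomial-to-geometric comparison. One minor remark: since $\delta < \tfrac{1}{2r} \le \tfrac{1}{2}$, the hypothesis $|a-1|\le\delta$ already forces $a > 1/2 > 0$, so the assumption $a \in [0,2]$ is in fact redundant (for both proofs), not merely ``mildly used''.
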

\begin{proof}
    Let $x = a - 1 \in [-\delta, \delta]$.
    If $0 \leq a \leq 1$, 
    we have $1 \geq a^r = (1 + x)^r \geq 1 + rx \geq 1 -r\delta$.
    If $1 < a < 1 + 1 / r$,
    then $0 < x < 1 / r$ and hence  
    $$
        1 \leq a^r = (1 + x)^r 
        = \sum_{i = 0}^r {r \choose i} x^i 
        \leq \sum_{i = 0}^r r^i x^i 
        < \sum_{i = 0}^\infty (rx)^i = \frac{1}{1 - rx} = 1 + \frac{rx}{1 - rx} \leq 1 +  2r\delta.
    $$
\end{proof}

\section{Bounding the Deviation Term}
\label{appendix:deviation:term}
\begin{proof}[Proof of \cref{lem:data:term}]
  Our assumption on $n$ implies 
  that $(p - q) n / k > 100C_0\sigma\sqrt{n}$.
  By \cref{prop:random:matrix:norm}, with probability at least $1 - n^{-3}$,
  we have 
  $$
    \norm{E}_2  \leq C_0\sigma\sqrt{n} \leq 0.01(p - q) n / k.
  $$
  According to \cref{equ:data-preserving} and $s_u \geq \frac{n}{2k}$, we have 
  \begin{align*}
    \norm{(P_{\wh{G}} - I) G_u} \leq \frac{2\norm{E}_2}{\sqrt{s_u}}
    \leq 0.04(p - q) n / k.
  \end{align*}
\end{proof}

\section{Polynomial Approximation}
\label{appendix:polynomial:approx}
The proof of \cref{thm:eigenvalues} rely on the following result
on rank-one pertuebation.

\begin{proposition}[Eigenvalues under rank-one perturbation, Theorem 1 in \cite{BNS78Rank1Perturbation}] \label{prop:rank:one:perturbation}
  Let $C=D+\rho z z^T$, where $D$ is diagonal, $\norm{z}_2 = 1$. 
  Let $d_1 \geq d_2 \geq \cdots \geq d_n$ be the eigenvalues of $D$, 
  and let $\wt{d}_1 \geq \wt{d}_2 \geq  \cdots \geq  \wt{d}_n$ 
  be the eigenvalues of $C$. 
  Then
  $$
    \wt{d}_i = d_i+ \rho \mu_i, \quad 1 \leq i \leq n,
  $$ 
  where $\sum_{i=1}^n \mu_i=1$ and $0 \leq \mu_i \leq 1$.
\end{proposition}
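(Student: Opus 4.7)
The plan is to derive the claim directly from the two-sided Weyl eigenvalue inequalities and the trace identity, rather than via the secular-equation analysis used in \cite{BNS78Rank1Perturbation}. Define $\mu_i \eqdef (\wt{d}_i - d_i)/\rho$; the case $\rho = 0$ is trivial (all $\wt{d}_i = d_i$, and any choice of $\mu_i$ summing to $1$ works), so assume $\rho \neq 0$. It then suffices to establish (i) $0 \leq \mu_i \leq 1$ for each $i$ and (ii) $\sum_{i=1}^n \mu_i = 1$.

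The key observation is that $\rho z z^T$ is a rank-one symmetric matrix whose single nonzero eigenvalue equals $\rho \norm{z}_2^2 = \rho$, with the remaining $n-1$ eigenvalues equal to zero. Assume first $\rho > 0$, so $\rho z z^T$ is positive semidefinite with spectrum $\{\rho, 0, \dots, 0\}$. Applying the sharper two-sided Weyl inequality $\lambda_i(A) + \lambda_n(B) \leq \lambda_i(A + B) \leq \lambda_i(A) + \lambda_1(B)$ to $A = D$, $B = \rho z z^T$ gives
$$
    d_i \;=\; d_i + 0 \;\leq\; \wt{d}_i \;\leq\; d_i + \rho,
$$
so $0 \leq \mu_i \leq 1$. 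When $\rho < 0$, the matrix $\rho z z^T$ is negative semidefinite with spectrum $\{\rho, 0, \dots, 0\}$; the same Weyl inequality now yields $d_i + \rho \leq \wt{d}_i \leq d_i$, and dividing by the negative scalar $\rho$ again produces $0 \leq \mu_i \leq 1$.

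For the sum identity, invariance of the trace under similarity gives
$$
    \sum_{i=1}^n \wt{d}_i \;=\; \trace(C) \;=\; \trace(D) + \rho \cdot \trace(z z^T) \;=\; \sum_{i=1}^n d_i + \rho \norm{z}_2^2 \;=\; \sum_{i=1}^n d_i + \rho,
$$
and rearranging yields $\sum_{i=1}^n (\wt{d}_i - d_i) = \rho$, i.e.\ $\sum_i \mu_i = 1$. I anticipate no serious obstacle: the only care needed is to use Weyl's inequality in its two-sided form (sharper than the absolute-value version stated earlier as Proposition 2.3, which alone only yields the weaker range $\mu_i \in [-1, 1]$) and to verify that both sign cases of $\rho$ reduce to the same conclusion. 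The degenerate case $n = 1$ is handled directly by the trace identity, since then $\mu_1 = 1$ automatically.
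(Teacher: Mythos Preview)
The paper does not supply its own proof of this proposition; it merely cites it as Theorem~1 in \cite{BNS78Rank1Perturbation} and invokes it as a black box in the proof of \cref{thm:eigenvalues}. So there is no paper proof to compare against.

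Your argument is correct and self-contained. The two-sided Weyl inequality $\lambda_i(A) + \lambda_n(B) \le \lambda_i(A+B) \le \lambda_i(A) + \lambda_1(B)$, applied with $B = \rho z z^\top$ having spectrum $\{\rho,0,\dots,0\}$, yields $0 \le \mu_i \le 1$ in both sign cases of $\rho$, and the trace identity gives $\sum_i \mu_i = 1$. This is more elementary than the secular-equation analysis in \cite{BNS78Rank1Perturbation}; the price is that you do not recover the finer interlacing structure or the explicit formula for the $\mu_i$ that the secular equation provides, but the paper needs neither of those---only the conclusion as stated. Your handling of the degenerate cases ($\rho = 0$, $n = 1$) is also fine.
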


\begin{proof}[Proof of \cref{thm:eigenvalues}]
  Let $\chi_i \in \set{0 , 1}^V$ be the indicator vector for $V_i$, \ie $\chi_i(u) = 1$ iff $u \in  V_i$. 
  It is easy to see that the eigenvectors of $H$
  are $\frac{1}{\sqrt{s_1}} \chi_{1}, \dots, \frac{1}{\sqrt{s_k}} \chi_{k}$.
  Write $U = \left(\frac{1}{\sqrt{s_1}} \chi_{1}, \dots, \frac{1}{\sqrt{s_k}} \chi_{k}\right) \in \R^{V \times V}$,
  $D = \mathrm{diag}((p - q)s_1, \dots, (p - q)s_k, 0, \dots, 0)$,
  then we have $H = U  D U^\top$.
  Note that $\one_n = U (\sqrt{s_1}, \dots, \sqrt{s_n})^\top$,
  and hence 
  $$
      G = H + q \one_n \one_n^\top = U (D + \rho z z^\top) U^\top,
  $$
  where $\rho = nq, z = \frac{1}{\sqrt{n}} (\sqrt{s_1}, \dots, \sqrt{s_n})^\top$.
  This means the eigenvalues of $G$ are the same as those of $D + \rho zz^\top$.
  Since $\norm{z} = 1$, Item 1 follow directly from \cref{prop:rank:one:perturbation}.
  To see Item 2, we use the Rayleigh quotient characterization of the largest eigenvalue:
  \begin{align*}
    \lambda_1(G) = \max_{v} \frac{v^\top G v}{\norm{v}^2} 
    \geq \frac{\one_n^\top G \one_n}{n} 
    = \frac{\sum_{u , v \in X} G_{uv}}{n} 
    &= \frac{n^2 q + (p - q)\cdot (s_1^2 + \cdots + s_k^2)}{n} \\
    &\geq nq + (p - q) \frac{n}{k}.
  \end{align*}
  where the last inequality follows from $\frac{n}{k} = \frac{1}{k} \sum_{i = 1}^k s_i \leq \sqrt{ \sum_{i = 1}^k s_i^2 / k}$,
\end{proof}

\begin{proof}[Proof of \cref{claim:small:spectral:difference}]
    The assumption on $n$ in \cref{lem:approx:proj} implies that 
    $\mu = (p - q) n / k \geq 100C_0\sigma \sqrt{n} \cdot \log n$.
    By Weyl's inequality, we have 
    \begin{align*}
      \abs{\wh{\lambda}_i - \lambda_i} \leq \norm{E}_2 
      \leq C_0\sigma\sqrt{n} 
      \leq \frac{\mu}{100\log n}, \forall i \in [n].
    \end{align*}
    Meanwhile, by \cref{thm:eigenvalues}, 
    \begin{align*}
        \abs{\lambda_i - \mu} 
        \leq  \abs{\lambda_i(G) - (p - q)s_i } + \abs{(p - q)s_i - \mu} 
        \leq \frac{(p - q)n/ k}{16 \log n} + \frac{(p - q)n / k}{16 \log n}
        \leq \frac{\mu}{8 \log n}.
    \end{align*}
    for $i = 2,3, \dots, k$.
    Hence, write $\eps \eqdef \frac{\mu}{6 \log n}$,
    we have 
    \begin{enumerate}
      \item $\abs{\wh{\lambda}_1 - \lambda_1} \leq \eps$;
      \item $\lambda_2, \dots, \lambda_k, \wh{\lambda}_2, \dots, \wh{\lambda}_k \in [\mu -\eps, \mu + \eps]$;
      \item for every $i \geq k + 1$, $\abs{\wh{\lambda_i}} \leq \eps$.
    \end{enumerate}

    First, $\psi(\lambda_1) = 1$ according to the definition of $\psi$, and hence $\varphi(\lambda_1) = 1$.
    As for $\wh{\lambda}_1$, 
    \begin{align*}
      \abs{\psi(\wh{\lambda}_1) - 1}
      = \abs{\psi(\wh{\lambda}_i) - \psi(\lambda_1)} 
      &\leq |A|\eps^2 + |2A\lambda_1 + B|\eps &(\text{by definition of $\psi$})\\
      &\leq \frac{\eps^2}{\lambda_1 \mu} + \frac{\eps}{\mu} &\text{(since $2A\lambda_1 + B = \frac{1}{\lambda_1} -\frac{1}{\mu} \geq  -\frac{1}{\mu})$}\\
      &\leq \frac{1}{36 \log^2 n} + \frac{1}{6 \log n}
      &\text{(as $\frac{\eps}{\mu} = \frac{1}{6\log n}$)}\\
      &< \frac{1}{4\log n}.
    \end{align*}
    Consequently, $|\varphi(\wh{\lambda}_1) - 1| < \frac{2r}{4\log n} \leq 1/2$ by \cref{prop:close:to:one}.

    Next, for $a \in \set{\lambda_2, \dots, \lambda_k, \wh{\lambda}_2, \dots,  \wh{\lambda}_k}$,
    the argument is similar:
    \begin{align*}
      \abs{\psi(a) - 1}
    = \abs{\psi(a) - \psi(\mu)} 
    \leq |A|\eps^2 + |2A\mu + B|\eps 
    \leq \frac{\eps^2}{\lambda_1 \mu} 
    +  \frac{\eps}{\mu}
    < \frac{1}{4\log n},
  \end{align*}
  where the second inequality follows from $2A\mu + B =  \frac{1}{\mu} - \frac{1}{\lambda_1} \leq \frac{1}{\mu}$.
  This yields $\abs{\varphi(a) - 1} \leq 1/2$ by \cref{prop:close:to:one}.

  Finally, for $i \geq k + 1$, it holds that 
  $$
    \abs{\psi(\wh{\lambda}_i)}  \leq  |A|\eps^2 + B\eps 
    = \frac{\eps^2}{\lambda_1 \mu} + \frac{\eps}{\mu} < \frac{1}{4 \log n},
  $$
  which means $\abs{\varphi(\wh{\lambda}_i)}^r = \abs{\psi(\wh{\lambda}_i)}^r 
  < \left(\frac{1}{4\log n}\right)^{\log n} < n^{- \log \log n}$.
\end{proof}

\section{Bounding the Noise Term}
\label{appendix:noise:term}

  \begin{proof}[Proof of \cref{lem:error:term:1}]
    The lemma readily follows from the following entrywise bound and Chernoff bound.
    \begin{claim}[Entries of $\psi(G)$] \label{claim:entries:of:F}
      For every $u ,v \in X$, if $u,v \in V_\ell$ for some $\ell$, then $0 \leq F_{uv} \leq \frac{5k}{n}$;
      otherwise, $\abs{F_{uv}} \leq \frac{10}{n}$.
    \end{claim}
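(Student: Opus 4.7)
My plan is to compute $F_{uv} = A(G^2)_{uv} + B\,G_{uv}$ entry-by-entry using the block structure of $G$, and then to show that, after substituting the estimates of $\lambda_1 := \lambda_1(G)$ and of the $s_\ell$ coming from \cref{thm:eigenvalues} and the balanced-partition assumption, the resulting expressions collapse to the target bounds. A direct sum over the middle index gives
\[
(G^2)_{uv} = \begin{cases} s_\ell p^2 + (n - s_\ell)\, q^2 & \text{if } u, v \in V_\ell, \\ (s_\ell + s_m)\, pq + (n - s_\ell - s_m)\, q^2 & \text{if } u \in V_\ell,\, v \in V_m,\, \ell \neq m, \end{cases}
\]
and substituting $A = -1/(\lambda_1 \mu)$ and $B = 1/\lambda_1 + 1/\mu$ yields closed-form expressions for $F_{uv}$ in both cases.

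The motivating observation is that in the idealized case $\lambda_1 = nq + \mu$ and $s_\ell = n/k$, the within-cluster entry collapses algebraically to $(p-q)/\mu = k/n$ and the between-cluster entry to $0$; this matches the intuition that $\psi$ sends the top-$k$ eigenvalues to $1$, and that in the perfectly balanced case $(P_{G_k})_{uv}$ equals $k/n$ within each block and $0$ outside. To make this rigorous I would parametrize
\[
\lambda_1 = nq + \mu + \eps_1, \qquad s_\ell = (n/k)(1 + \eps_\ell),
\]
where $0 \leq \eps_1 \leq \mu/(16 \log n)$ (the lower bound from \cref{thm:eigenvalues}, the upper bound from $\delta_1 \leq nq$ together with the balanced bound on $s_1$) and $|\eps_\ell| \leq 1/(16 \log n)$. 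After a short algebraic simplification the within-cluster expression becomes
\[
F_{uv} = \frac{k}{n} + \frac{q\,\eps_1}{\lambda_1 \mu} - \frac{\eps_\ell (p+q)}{\lambda_1},
\]
and the between-cluster one becomes $F_{uv} = q\bigl[\eps_1 - (\eps_\ell + \eps_m)\mu\bigr]/(\lambda_1 \mu)$.

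What remains is to bound the correction terms, which are all multiples of $q/\lambda_1$ or $(p+q)/\lambda_1$. The bound $q/\lambda_1 \leq 1/n$ is immediate from $\lambda_1 \geq nq$. For $(p+q)/\lambda_1$ I would do a short case split: if $nq \geq \mu$, then $\lambda_1 \geq 2 nq$ and one checks $p \leq (k+1) q$, giving $(p+q)/\lambda_1 \leq (k+2)/(2n)$; otherwise $\lambda_1 \geq \mu$ and $p - q \geq pk/(k+1)$, again yielding $(p+q)/\lambda_1 = O(k/n)$. Combined with the $1/(16\log n)$ slack in $\eps_1, \eps_\ell$, the within-cluster correction is $O(k/(n \log n))$ and the between-cluster entry is $O(1/(n \log n))$, both easily within the stated tolerances of $5k/n$ and $10/n$.

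The main obstacle, as I see it, is nothing deep but rather clean bookkeeping of the $1/(16\log n)$ factors through the algebraic simplification. The claim leaves ample slack (the true leading behavior is $F_{uv} \approx k/n$ inside clusters and $F_{uv} = O(1/(n\log n))$ across clusters), so crude estimates should suffice throughout and no inequality sharper than $\lambda_1 \geq nq + \mu$ from \cref{thm:eigenvalues} will be needed.
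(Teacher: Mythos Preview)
Your approach is correct and in the same spirit as the paper's: both start from $F_{uv}=A(G^2)_{uv}+B\,G_{uv}$, compute $(G^2)_{uv}$ from the block structure, and use only $\lambda_1\ge nq+\mu$ together with the balanced partition. The difference is presentational. The paper keeps $s_u$ and $\lambda$ symbolic, rewrites the numerator as $q(\mu+\lambda-nq)+(p-q)(\lambda+\mu-(p+q)s_u)$, and bounds the two pieces crudely via $(p-q)s_u\le 2\mu$ and $\lambda\le nq+2\mu$; this yields $0\le F_{uv}\le 3/n+2k/n$ within clusters and $|F_{uv}|\le 7/n$ across clusters. You instead push the algebra further to isolate the exact leading term ($k/n$ within clusters, $0$ across), and then bound the perturbations $\eps_1,\eps_\ell$ using the tight $1/(16\log n)$ slack. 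Your route gives strictly sharper information ($F_{uv}=k/n+O(k/(n\log n))$ and $|F_{uv}|=O(1/(n\log n))$), and in particular makes the nonnegativity $F_{uv}\ge 0$ immediate, whereas the paper spends a few lines on it.

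One small slip: in your case split for $(p+q)/\lambda_1$, the implication ``$nq\ge\mu\Rightarrow\lambda_1\ge 2nq$'' is backwards (you get $\lambda_1\ge 2\mu$, not $2nq$). This does not matter: in that case $p\le(k+1)q$ as you note, and the trivial bound $\lambda_1\ge nq$ already gives $(p+q)/\lambda_1\le(k+2)/n$, which is all you need. The rest of the argument goes through unchanged.
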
  
    
    We decompose $F = F' + F''$, 
    where $F'$ is the intra-cluster part, 
    \ie $F'_{uv} = F_{uv}$ if  $u,v \in V_\ell$ for some $\ell$, and $F'_{uv} = 0$ otherwise. 
    Since for every column of $F'_v$, its non-zero entries are identical and at most $5k / n$ by the above claim. 
    Hence, every entry of $EF'$ equals to 
    the sum of at most $2n / k$
    independent, centered Bernoulli variables with parameter $p$, scaled by some factor at most  $\frac{5k}{n}$.
    By Chernoff bound, $\pr[E]{|(EF')_{uv}| > 10 \sqrt{k p \log n/ n}} \leq n^{-4}, \forall u, v \in V$,
    and we have $\pr[E]{\norm{EF'}_{2 \to \infty} \leq 10 \sqrt{k p \log n }} \geq 1 -  n^{-2}$ by union bound.
    Analogously, by Hoeffding bound, $\pr[E]{\norm{EF''}_{2 \to \infty} \leq 10 \sqrt{\log n}} \geq 1 -  n^{-2}$.
    Since  
    $\norm{EF}_{2 \to \infty} \leq \norm{EF'}_{2 \to \infty}+ \norm{EF''}_{2 \to \infty}$,
    the lemma follows from the above two inequalities and union bound.
  \end{proof}
  
  \begin{proof}[Proof of \cref{claim:entries:of:F}]
    Write $\lambda = \lambda_1(G)$ and recall that 
    (i) $(p - q)s_u \leq 2\mu$ for all $u$ (ii) $nq + \mu \leq \lambda \leq  nq + (p - q)s_1 < nq + 2\mu$, 
    (iii) $\lambda > p\cdot \mu$, and 
    for all $u \in V$.
    Assume that $u, v \in V_\ell$ for some $\ell$. 
    Then 
      \begin{align*}
        F_{uv} 
        = AG_u^\top G_v + BG_{uv} 
        &= - \frac{nq^2 + (p^2 - q^2)s_u}{\lambda\mu} + \left(\frac{1}{\lambda} + \frac{1}{\mu}\right) p \\
        &= \frac{-nq^2 - (p^2 - q^2)s_u + (p - q)(\lambda + \mu) + q(\lambda + \mu)}{\lambda \mu} \\
        &= \frac{q(\mu + \lambda - nq) + (p - q)(\lambda + \mu - (p  + q)s_u) }{\lambda \mu}.
      \end{align*}
      Since $\lambda - nq \geq \mu $,
      the numerator is at least 
      $$
            2q\mu + (p - q)(\lambda + \mu - (p  + q)s_u)
            =(p - q)\left(2qn / k + \lambda + \mu - (p  + q)s_u\right).
      $$
      Because $s_u \leq 2n / k, \lambda \geq nq + (p - q)n / k$, we have 
      $$
        2qn / k + \lambda + \mu - (p  + q)s_u
        >  2qn / k + nq + (p - q)2 n / k -  (p  + q)2n / k
        = (n - 2n /k) q \geq 0,
      $$
      which means $F_{uv} \geq 0$.
      Meawhile, 
      $$
        F_{uv} \leq \frac{q(\mu + \lambda - nq)}{\lambda\mu}
        + \frac{(p - q)(\lambda + \mu)}{\lambda\mu},
      $$
      where the first term is at most $\frac{3q}{\lambda} \leq \frac{3}{n}$ by (ii); 
        second term is at most $\frac{2(p - q)}{\mu} \leq \frac{2k}{n}$.
      Therefore, $|F_{uv}| \leq \frac{5k}{n}$.
    
    For the second part, assume that $u, v$ are not in the same cluster. 
    Then 
      \begin{align*}
        F_{uv} &= AG_u^\top G_v + BG_{uv} 
        = -\frac{nq^2 + (pq - q^2)(s_u + s_v)}{\lambda \mu} +   \left(\frac{1}{\lambda} + \frac{1}{\mu}\right) q.
      \end{align*}
      Hence, 
      $$
        \abs{F_{uv}} 
        \leq \abs{\frac{q(\lambda + \mu - nq)}{\lambda \mu}} + \abs{\frac{q(p - q)(s_u + s_v)}{\lambda \mu}}.
      $$
      By (ii), the first term is at most $\frac{3q}{\lambda} < \frac{3}{n}$;
      by (i), the second term is at most $\frac{4q}{\lambda} \leq \frac{4}{n}$;
      hence, $\abs{F_{uv}} \leq \frac{10}{n}$.
  \end{proof}
\paragraph{Upper bound of $\norm{EM'}_{2 \to \infty}$ (Proof of \cref{lem:em:bound})}
Write $L \eqdef A(EG + GE), R \eqdef AE^2 + BE$. 
Then $D^t F = (L + R)^t F = R^t F + R^{t - 1}LF + R^{t - 2}LDF + \cdots + RLD^{t - 2} + LD^{t - 1}F$.
It suffices to derive a good upper bound of $\norm{E^\eta L}_{2 \to \infty}$ and $\norm{E^\eta F}_{2 \to \infty}$,
as $R^w$ can be further expressed as sum of powers of $E$. This is done by the following lemma:
\begin{lemma} \label{lem:error:term:2}
    The following holds with probability $1 - O(n^{-2})$ over the choice of $E$: 
    for all $\eta \leq \log n$, it holds that 
    \begin{itemize}
      \item $\norm{E^\eta L}_{2 \to \infty} \leq C_2 \sqrt{kp}  (100 \sqrt{np})^{\eta - 1} \log^{5\eta} n$,
      \item $\norm{E^\eta F}_{2 \to \infty} \leq C_2 \sqrt{kp} (100 \sqrt{np})^{\eta - 1} \log^{5\eta} n$,
    \end{itemize}
  where $C_2 \eqdef 10^6$ is an absolute constant.
\end{lemma}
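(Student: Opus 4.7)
The plan is to apply \cref{prop:entrywise:concentration} to bound each quantity, after decomposing the right-hand matrix ($F$ or $L$) so that every piece has either a small maximum entry or small column sparsity.

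\emph{Bounding $\norm{E^\eta F}_{2 \to \infty}$.} I will reuse the entrywise estimates on $F = \psi(G)$ from \cref{claim:entries:of:F}: intra-cluster entries lie in $[0, 5k/n]$ and inter-cluster entries have absolute value at most $10/n$. Split $F = F^{\mathrm{in}} + F^{\mathrm{out}}$ accordingly and apply \cref{prop:entrywise:concentration} with $t = \eta$ and $\alpha = p$ to each. For $F^{\mathrm{in}}$, take $\beta \leq 5k/n$ and $\gamma \leq \max_j s_j \leq 2n/k$; then $\beta\sqrt{\alpha\gamma} = O(\sqrt{kp/n})$, which after the $\sqrt{n}$ factor yields $O(\sqrt{kp})(100\sqrt{np})^{\eta-1}\log^{5\eta} n$. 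For $F^{\mathrm{out}}$, take $\beta \leq 10/n$ and $\gamma \leq n$; this gives the strictly smaller $O(\sqrt{p})(100\sqrt{np})^{\eta-1}\log^{5\eta}n$. Summing proves the stated bound on $\norm{E^\eta F}_{2 \to \infty}$.

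\emph{Bounding $\norm{E^\eta L}_{2 \to \infty}$.} I will use the exact rank decomposition $G = (p-q)\chi\chi^\top + qJ$, where $\chi \eqdef [\chi_1 \mid \cdots \mid \chi_k] \in \R^{n \times k}$ and $J \eqdef \one_n \one_n^\top$. Since $E$ is symmetric,
\[
  E^\eta L = A(p-q)\bigl(E^{\eta+1}\chi\chi^\top + E^\eta \chi\chi^\top E\bigr) + Aq\bigl(E^{\eta+1}J + E^\eta J E\bigr),
\]
and I bound each of the four summands in turn. For the $\chi$-summands I use $\norm{E^t \chi\chi^\top}_{2 \to \infty} \leq \norm{\chi}_2 \cdot \norm{E^t \chi}_{2\to\infty} \leq \sqrt{2n/k} \cdot \norm{E^t \chi}_{2\to\infty}$ together with $\norm{E^t \chi}_{2\to\infty} \leq \sqrt{k} \max_{j, u} |(E^t \chi_j)_u|$. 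Applying \cref{prop:entrywise:concentration} to each column $\chi_j$ (for which $\beta = 1$ and $\gamma \leq 2n/k$) bounds the latter by $O(\sqrt{np/k})(100\sqrt{np})^{t-1}\log^{5t} n$. Combined with $|A|(p-q) \leq (p-q)/\mu^2$, $\norm{E}_2 \leq C_0\sqrt{np}$ for the trailing $E$, and the hypothesis $\mu \geq (100+C_0)\sqrt{np}\log^6 n$, each $\chi$-summand is at most $\sqrt{kp}(100\sqrt{np})^{\eta-1}\log^{5\eta}n$ divided by a positive power of $\log n$. For the $J$-summands, \cref{prop:entrywise:concentration} with $\beta = 1$ and $\gamma = n$ gives $\norm{E^t J}_{2\to\infty} = O(n\sqrt{p})(100\sqrt{np})^{t-1}\log^{5t} n$, and the key refinement $|A|q \leq 1/(n\mu)$---which follows from $\lambda_1(G) \geq nq$ via \cref{thm:eigenvalues}---absorbs the extra factor of $n$ produced by $J$. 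Summing the four pieces and taking a union bound over $\eta = 1, \dots, \log n$ then yields the lemma with $C_2 = 10^6$ and failure probability $O(n^{-2})$.

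\emph{Main obstacle.} The delicate point is the rank-one $qJ$ contribution. A naive bound $|A|q \leq q / \mu^2$ loses an extra factor of order $nq/\mu$, and the hypothesis controls $p - q$ and $p$ but not $q$ directly (so $q$ may be comparable to $p$ and $nq$ may dwarf $\mu$). The refinement $\lambda_1(G) \geq nq$, by Rayleigh's principle or by \cref{thm:eigenvalues}, sharpens $|A|q$ to $1/(n\mu)$ and precisely cancels the dimension factor from $\one_n\one_n^\top$. Analogously, one cannot apply \cref{prop:entrywise:concentration} directly to $G$ viewed as a dense matrix with $\beta = p$ and $\gamma = n$---this would cost an extra $\sqrt{np}$; one must first separate out the rank-$k$ part $(p-q)\chi\chi^\top$ so that its contribution is controlled through the sparsity $\gamma \leq 2n/k$ of the cluster indicators $\chi_j$.
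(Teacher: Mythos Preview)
Your proposal is correct and follows essentially the same route as the paper: the same split $G = H + qJ_n$ (equivalently $(p-q)\chi\chi^\top + qJ$), the same intra/inter decomposition of $F$ via \cref{claim:entries:of:F}, the same application of \cref{prop:entrywise:concentration}, and crucially the same refinement $|A|q \le 1/(n\mu)$ obtained from $\lambda_1(G)\ge nq$. The only cosmetic difference is that the paper applies \cref{prop:entrywise:concentration} directly to the block-diagonal matrix $H$ (with $\beta = p-q$, $\gamma = 2n/k$) rather than to the individual indicator columns $\chi_j$, which yields the same bound up to constants.
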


Specifically, 
\begin{align*}
  ED^tF
  &= ER^t F  + \sum_{i = 0}^{t - 1} E R^i L D^{t - 1 - i}F \\
  &= \underbrace{E \sum_{j = 0}^t {t \choose j} A^j B^{t - j} E^{j + t} F}_{\eqdef M_t} 
  \quad + \underbrace{\sum_{i = 0}^{t - 1} E \sum_{j = 0}^i {i \choose j} A^j B^{i - j} E^{i + j}L   D^{t - 1 - i}F}_{\eqdef N_t}.
\end{align*}
Note that $|A^j B^{w-j}| \leq 2^{w} \cdot \mu^{-(w + j)} \leq 2^w \cdot  (100 \sqrt{np} \log^6 n)^{- (w + j)}$.
It follows that for every $t \in  [r - 1]$, 
\begin{align*}
  \norm{M_t}_{2 \to \infty} 
  &\leq \sum_{j = 0}^t {t \choose j} |A^j B^{t - j}| \norm{E^{t + j + 1}F}_{2 \to \infty} \\
  &\leq \sum_{j = 0}^t {t \choose j} 2^t \cdot  (100 \sqrt{np} \log^6 n)^{- (t + j)} 
  \cdot C_2 \sqrt{kp} \cdot  (100 \sqrt{np})^{t + j} \log^{5(t + j)} n  \\
  &\leq C_2 \sqrt{kp} \cdot 2^t \sum_{j = 0}^t {t \choose j} (\log n)^{-(t + j)} \\
  &= C_2 \sqrt{kp} \cdot  2^t (\log n)^{-t} \cdot (1 + \frac{1}{\log n})^t  \leq C_2 \sqrt{kp},
\end{align*}
where the second inequality is by
\cref{lem:error:term:2}, 
and the last step follows from  \cref{prop:close:to:one}.
Similarly, for every $t \in  [r - 1]$, 
\begin{align*}
  \norm{N_t}_{2 \to \infty} 
  &\leq \sum_{i = 0}^{t - 1} \sum_{j = 0}^i {i \choose j} |A^j B^{i - j}| \norm{E^{i + j + 1}L}_{2 \to \infty} \norm{D ^{t - 1 - i} F}_2 \\
  &\leq 2\sum_{i = 0}^{t - 1} \sum_{j = 0}^i {i \choose j} |A^j B^{i - j}| \norm{E^{i + j + 1}L}_{2 \to \infty}\\
  &\leq 2\sum_{i = 0}^{t - 1} \sum_{j = 0}^i {i \choose j} 2^i  \cdot (100 \sqrt{np} \log^6 n)^{- (i + j)} 
\cdot C_2 \sqrt{kp}\cdot  (100 \sqrt{np})^{i + j} \log^{5(i + j)} n\\
  &\leq 2 C_2 \sqrt{kp} \sum_{i = 0}^{t - 1} 2^i \sum_{j = 0}^i {t \choose j} (\log n)^{-(i + j)} \\
  &\leq 2 C_2\sqrt{kp}\cdot t \leq  2 C_2\sqrt{kp} \cdot \log n,
\end{align*}
where the second inequality follows from $\norm{D ^{t - 1 - i} F}_2 \leq 2$, and the third inequality is by \cref{lem:error:term:2}.
In sum, 
\begin{equation} \label{equ:poly:difference:term:3}
    \begin{aligned}
        \norm{EM'}_{2 \to \infty}
        \leq \sum_{t = 1}^{r - 1} (\norm{M_t}_{2 \to \infty} + \norm{N_t}_{2 \to \infty}) \norm{\wh{F}}^{r - 1- t}_2 
        \leq 6 C_2 \sqrt{kp} \log^2 n,
    \end{aligned}
\end{equation}
where in the last inequality we also use $\norm{\wh{F}}_2^{t} \leq 2$.

  It remains to prove \cref{lem:error:term:2}; the proof draws on 
  the entrywise bound in \cref{prop:entrywise:concentration}. 
  \begin{proof}[Proof of \cref{lem:error:term:2}]
    Fix an $\eta \leq \log n$.
    Write $s^* \eqdef n / k$ for the ease of notation.
    Observe that 
    $E^\eta L = A (E^{\eta + 1} G + E^\eta G E) = A (E^{\eta + 1} H + E^\eta H E) + Aq(E^{\eta + 1} J_n + E^\eta J_n E) $.
    We can apply \cref{prop:entrywise:concentration} to $E^{\eta + 1} H$ and 
    $E^{\eta} H$, with $\alpha = p, \beta = p - q, \gamma = 2s^*$. 
    That is, with probability at least $1 - O(n^{-2})$, 
    \begin{align*}
      \norm{E^j H}_{2 \to \infty} 
      \leq 500 (\log n)^{5j} \sqrt{n} (p - q) \sqrt{p} \sqrt{2s^*} \cdot \left(100 \sqrt{np}\right)^{j - 1} 
      \text{ for } j =\eta, \eta + 1. 
    \end{align*}
    Our assumption on $n$ yields $\mu \geq C  \sqrt{np} \log^6 n$; moreover, 
    $|A|(p - q) \leq \frac{p - q}{\mu^2} 
    \leq 1 / s^* \cdot \frac{1}{\mu} 
    \leq 1 / s^* \cdot (C  \sqrt{np} \log^6 n)^{-1}$.
    Therefore, 
    \begin{align}
      &\norm{A (E^{\eta + 1} H + E^\eta H E)}_{2 \to \infty} \notag\\
      &\leq A\left(\norm{E^{\eta + 1} H}_{2 \to \infty} + \norm{E^\eta H}_{2 \to \infty} \norm{E}_2\right) \notag\\
      &\leq \frac{1}{s^*} \cdot (C \sqrt{np} \log^6 n)^{-1} \cdot 500 (\log n)^{5\eta + 5} \cdot \sqrt{n}\notag
      \cdot \sqrt{p} \cdot \sqrt{2s^*} 
      \left(\left(100 \sqrt{np}\right)^{\eta} + \left(100 \sqrt{np}\right)^{\eta - 1} C_0 \sigma \sqrt{n}\right) \notag\\
      &\leq 500000 \sqrt{np / s^*} (\log n)^{5 \eta}(100\sqrt{np})^{\eta - 1}. \label{equ:error:term:1-1}
    \end{align}
    Similarly, by applying \cref{prop:entrywise:concentration} to $E^{j} J_n$ with $\alpha = p, \beta = 1, \gamma = n$, we have, with probability at least $1 - O(n^{-2})$, 
    $$
      \norm{E^j J_n}_{2 \to \infty} \leq 
      500 (\log n)^{5j} \cdot \sqrt{p} \cdot n \cdot \left(100 \sqrt{np}\right)^{j - 1}, j =\eta, \eta + 1. 
    $$
    Since $|Aq| = \frac{q}{\lambda} \cdot \frac{1}{\mu} \leq \frac{1}{n} \cdot (C \sqrt{n} \log^6 n)^{-1}$,
    we have 
    \begin{align}
      &\norm{Aq (E^{\eta + 1} J_n + E^\eta J_n E)}_{2 \to \infty} \notag\\
      &\leq |Aq| \left(\norm{E^{\eta + 1} J_n}_{2 \to \infty} + \norm{E^\eta J_n}_{2 \to \infty} \norm{E}_2\right) \notag\\
      &\leq \frac{1}{n} \cdot (C \sqrt{np} \log^6 n)^{-1}  \cdot 
       500 \cdot (\log n)^{5\eta + 5} \cdot  n \cdot \left(\left(100 \sqrt{np}\right)^{\eta} + \left(100 \sqrt{np}\right)^{\eta - 1} C_0 \sigma \sqrt{n}\right) \notag\\
      &\leq 500000  \sqrt{p} (\log n)^{5 \eta}(100\sqrt{np})^{\eta - 1}. \label{equ:error:term:1-2}
    \end{align}
    Combining \cref{equ:error:term:1-1} and \cref{equ:error:term:1-2},
    we have, with probability at least $1 - O(n^{-2})$, 
    $\norm{E^\eta L}_{2 \to \infty} \leq C_2 \sqrt{np / s^*} (\log n)^{5 \eta} (100 \sqrt{np})^{\eta - 1}$
    where $C_2 \eqdef 10^6$.
  
    For the second part, we decompose $F = F' + F''$, 
    where $F'$ is the intra-cluster part, 
    \ie $F'_{uv} = F_{uv}$ if $u, v \in V_\ell$ for some $\ell$; and $F'_{uv} = 0$ otherwise.  
    Equipped with \cref{claim:entries:of:F},
    we can apply \cref{prop:entrywise:concentration} on $E^\eta F'$ 
    (with $\alpha = p, \beta = 10 / s^*, \gamma = 2 s^*$),
    and $E^\eta F''$ (with $\alpha = p, \beta = \frac{5}{n}, \gamma = n$):
    \begin{align*}
        \norm{E^\eta F}_{2 \to \infty} 
        &\leq \norm{E^\eta F'}_{2 \to \infty} + \norm{E^\eta F''}_{2 \to \infty}\\
        &\leq 20000 \log^{5 \eta} \sqrt{n} \sqrt{p} \left(\frac{10}{s^*} \cdot \sqrt{2 s^*} + \frac{10}{n} \cdot \sqrt{n}\right) (100 \sqrt{n})^{\eta - 1}\\
        &\leq C_2 (\log n)^{{5 \eta}} \sqrt{np / s^*} (100 \sqrt{n})^{\eta - 1},
    \end{align*} 
    where the second inequality holds with probability at least $1 - O(n^{-2})$.
    The lemma follows from a union bound over all $\eta\leq \log n$.
  \end{proof}

\end{document}